\documentclass{amsart}

\usepackage[foot]{amsaddr}

% Language setting
% Replace `english' with e.g. `spanish' to change the document language
%\usepackage{neurips_2023}
\usepackage[english]{babel}
\usepackage{amssymb,amsthm,amsfonts}
% Set page size and margins
% Replace `letterpaper' with `a4paper' for UK/EU standard size
%\usepackage[letterpaper,top=2cm,bottom=2cm,left=3cm,right=3cm,marginparwidth=1.75cm]{geometry}
\usepackage[utf8]{inputenc} % allow utf-8 input
\usepackage[T1]{fontenc}    % use 8-bit T1 fonts
% Useful packages
\usepackage{amsmath,amsthm}
\usepackage{graphicx}
\usepackage[colorlinks=true, allcolors=blue]{hyperref}
\usepackage{todonotes}
\usepackage{hyperref}       % hyperlinks
\usepackage{url}            % simple URL typesetting
\usepackage{booktabs}       % professional-quality tables
\usepackage{nicefrac}       % compact symbols for 1/2, etc.
\usepackage{microtype}      % microtypography
\usepackage{xcolor}         % colors
\usepackage[normalem]{ulem}

\newtheorem{theorem}{Theorem}

\newtheorem{thm2}{Theorem}
\newtheorem{thm3}{Theorem}
\newtheorem{thm}{Theorem}%[section] 
\newtheorem{definition}[thm]{Definition}

\newtheorem{lemma}[thm2]{Lemma}
\newtheorem{remark}[thm3]{Remark}

\usepackage{tikz}
\usetikzlibrary{calc, intersections}
\usetikzlibrary{arrows, shapes, fit}
\usetikzlibrary{positioning}
\usetikzlibrary{arrows.meta}
\usetikzlibrary{matrix,decorations.pathreplacing, calc, positioning,fit}
\usepackage{tkz-graph}
\usetikzlibrary{patterns,snakes}
\usetikzlibrary{arrows, shapes, fit}
\usetikzlibrary{shapes.geometric,calc}
\urlstyle{same}
\usepackage{array}
\usetikzlibrary{arrows.meta}
\tikzset{main node/.style={rectangle,fill=blue!20,draw=none,minimum size=0.3cm,inner sep=3pt, rounded corners = 3pt,font=\sffamily},}

% examples of more authors
% \And
% Coauthor \\
% Affiliation \\
% Address \\
% \texttt{email} \\
% \AND
% Coauthor \\
% Affiliation \\
% Address \\
% \texttt{email} \\
% \And
% Coauthor \\
% Affiliation \\
% Address \\
% \texttt{email} \\
% \And
% Coauthor \\
% Affiliation \\
% Address \\
% \texttt{email} \\

\newcommand{\R}{\mathbb{R}}

\begin{document}
	\title[On the power of graph neural networks]{On the power of graph neural networks  and the role of the activation function}

	\author{Sammy Khalife}
	\address{Johns Hopkins University, Department of Applied Mathematics and Statistics}
	\curraddr{}
	\email{khalife.sammmy@jhu.edu}
	%\thanks{}

	\author{Amitabh Basu}
	\address{}
	\curraddr{}
	\email{basu.amitabh@jhu.edu}
	\subjclass[2020]{68T07, 68Q19, 05D10, 11J85}
	\maketitle

	\begin{abstract}
		In this article we present new results about the expressivity of Graph Neural Networks (GNNs). 
		We prove that for any GNN with piecewise polynomial activations, whose architecture size does not grow with the graph input sizes, there exists a pair of non-isomorphic rooted trees of depth two such that the GNN cannot distinguish their root vertex up to an arbitrary number of iterations. In contrast, it was already known that unbounded GNNs (those whose size is allowed to change with the graph sizes) with piecewise polynomial activations can distinguish these vertices in only two iterations. It was also known prior to our work that with ReLU (piecewise linear) activations, bounded GNNs  are weaker than unbounded GNNs~\cite{aamand2022exponentially}. Our approach adds to this result by extending it to handle any piecewise polynomial activation function, which goes towards answering an open question formulated by Grohe~\cite{grohe2021logic} more completely.
		Our second result states that if one allows activations that are not piecewise polynomial, then in two iterations a single neuron perceptron can distinguish the root vertices of any pair of nonisomorphic trees of depth two (our results hold for activations like the sigmoid, hyperbolic tan and others). This shows how the power of graph neural networks can change drastically if one changes the activation function of the neural networks.  The proof of this result utilizes the Lindemann-Weierstrauss theorem from transcendental number theory.
	\end{abstract}
	
	\section{Introduction}
	
	%\todo[inline]{Add result: If the activation is Analytic non polynomial \cite{amir2024neural,bravo2024dimensionality}, and 1dimensional embeddings are sufficient.}
	Graph Neural Networks (GNNs) form a popular framework for a variety of computational tasks involving network data, with applications ranging from analysis of social networks, structure and functionality of molecules in chemistry and biological applications, computational linguistics, simulations of physical systems, techniques to enhance optimization algorithms, to name a few. The interested reader can look at~\cite{zitnik2018modeling,bronstein2017geometric,ding2019cognitive,defferrard2016convolutional,scarselli2008graph, hamilton2020graph,duvenaud2015convolutional,khalil2017learning,stokes2020deep,zhou2020graph,battaglia2016interaction,cappart2021combinatorial,sanchez2020learning}, which is a small sample of a large and actively growing body of work.

	Given the rise in importance of inference and learning problems involving graphs and the use of GNNs for these tasks, significant progress has been made in recent years to understand their computational capabilities. See the excellent recent survey~\cite{jegelka2022theory} for an exposition of some aspects of this research. One direction of investigation is on their so-called {\em separation power} which is the ability of GNNs to distinguish graphs with different structures. In this context, it becomes natural to compare their separation power to other standard computation models on graphs, such as different variants of the {\em Wesfeiler-Lehman algorithm} \cite{cai1992optimal,xu2018powerful,huang2021short}, and the very closely related {\em color refinement algorithm} \cite{grohe2021logic}.  These investigations are naturally connected with {\em descriptive complexity theory}, especially to characterizations in terms of certain logics; see~\cite{grohe2017descriptive,grohe2021logic} for introductions to these different connections. A closely related line of work is to investigate how well general functions on the space of graphs can be approximated using functions represented by GNNs; see~\cite{azizian2021expressive,keriven2019universal,maron2019universality,geerts2022expressiveness} for a sample of work along these lines. Our work in this paper focuses on improving our understanding of the separation power of GNNs.
	
	At a high level, the computational models of GNNs, Wesfeiler-Lehman/color refinement type algorithms and certain logics in descriptive complexity are intimately connected because they all fall under the paradigm of trying to discern something about the global structure of a graph from local neighborhood computations. Informally, these algorithms iteratively maintain a state (a.k.a. ``color'') for each vertex of the graph and in every iteration, the state of a vertex is updated by performing some predetermined set of operations on the set of current states of its neighbors (including itself). The different kinds of allowed states and allowed operations determine the computational paradigm. For instance, in GNNs, the states are typically vectors in some Euclidean space and the operations for updating the state are functions that can be represented by deep neural networks. As another example, in the color refinement algorithm, the states are multisets of some predetermined finite class of labels and the operations are set operations on these multisets. A natural question then arises: Given two of these models, which one is more powerful, or equivalently, can one of the models always simulate the other? Several mathematically precise answers to such questions have already been obtained. For instance, it has been proved independently by~\cite{morris2019weisfeiler} and~\cite{xu2018powerful} that the color refinement algorithm precisely captures the expressiveness of GNNs in the sense that there is a GNN distinguishing two nodes of a graph (by assigning them different state vectors) if and only if color refinement assigns different multisets to these nodes. Such a characterization holds for \emph{unbounded} GNNs, i.e. GNNs for which the underlying neural network sizes can grow with the size of the input graphs. This implies a characterization of the distinguishability of nodes by GNNs as being equivalent to what is known as \emph{Graded Modal Counting Logic} (GC2); see \cite{barcelo2020logical} for some recent, quantitatively precise results in this direction.

	Reviewing these equivalences in a recent survey~\cite{grohe2021logic}, Grohe emphasizes the fact that the above mentioned equivalence between the separation power of GNNs and the color refinement algorithm has only been established for unbounded GNNs whose neural network sizes are allowed to grow as a function of the size of the input graphs. Question 1 on his list of important open questions in this topic asks what happens if one considers {\em bounded} GNNs, i.e., the the size of the neural networks is fixed a priori and cannot change as a function of the size of the input graphs. {\em Do bounded GNNs have the same separation power as unbounded GNNs and color refinement?} 
	{In \cite{aamand2022exponentially}, the authors  provide a negative answer to this question in the case of Rectified Linear Unit (ReLU) activations, by proving lower bounds on the size of the GNNs with ReLU activations that are needed to simulate color refinement. Their approach combines results from communication complexity and properties of ReLU neural networks, including upper bounds on the number of their linear regions. In this article, we answer this question in the negative in the more general case of \emph{piecewise polynomial activations}}. Given any bounded GNN with such activations, we explicitly construct two non isomorphic rooted trees of depth two such that their root nodes cannot be distinguished by the GNN. Interestingly, only the sizes of the trees depend on the GNN, but their depth does not. This result is stated formally in Theorem \ref{theorem:uniform} and it holds for bounded GNNs with piecewise polynomial activations (this includes, e.g., ReLU activations). We prove a second result that shows how the activation function dramatically impacts the expressivity of bounded size GNNs: if one allows activation functions that are not piecewise polynomial, all root nodes of rooted trees of depth two can be distinguished by a single neuron perceptron. This result is formally stated in Theorem \ref{result:other:activations}. \cite{amir2024neural} provide a more general result for analytic non polynomial activation functions: it is possible to simulate color refinement \emph{on any graph} if one allows analytic non polynomial functions. \cite{bravo2024dimensionality} also showed that  among GNNs with analytic non polynomial activations, those with one dimensional embeddings at every iteration %of the GNN with analytic {\color{red}which GNN? GNN with analytic activations?} 
	are sufficient to simulate color-refinement. In contrast with approaches presented in \cite{amir2024neural,bravo2024dimensionality}, our proof does not use the analytic properties of the activation function, but linear independence over the integers of the activation function evaluated at distinct integers.

	The rest of this article is organized as follows. In Section \ref{sec:2} we present the main definitions and formal statement of our results. In Section \ref{sec:3} we give an overview of the proofs. Sections \ref{sec:4} and  \ref{sec:5} fill in the technical details. %We conclude with some open problems at the end of Section \ref{sec:5}.
	
	\section{Formal statement of results}\label{sec:2}

	We assume graphs to be finite, undirected, simple, and vertex-labelled: 
	a graph is a tuple $G = (V(G),E(G),P_1(G),...,P_{\ell}(G)) $ consisting of a finite vertex set $V(G)$, a binary edge relation $E(G) \subset V(G)^{2}$ that is symmetric and irreflexive, and unary relations $P_1(G),\cdots ,P_{\ell}(G) \subset V(G)$ representing $\ell > 0$ vertex labels. 
	In the following, we suppose that the $P_i(G)$'s form a partition of the set of vertices of $G$, i.e. each vertex has a unique label. Also, the number $\ell$ of labels, which we will also call {\em colors}, is supposed to be fixed and does not grow with the size of the input graphs. When there is no ambiguity about which graph $G$ is being considered, $N(v)$ refers to the set of neighbors of $v$ in $G$ not including $v$. $\lvert G \rvert$ will denote the number of vertices of $G$. We use simple curly brackets for a set $X=\{x \in X\}$ and double curly brackets for a multiset $Y=\{\{y \in Y \}\}$. For a set $X$, $\lvert X\rvert $ is the cardinal of $X$. When $m$ is a positive integer, $\mathfrak{S}_m$ is the set of permutations of $\{1, \cdots, m\}$. We will also need the following basic notions from algebra. Given any ring $(R, +, \times)$, $R[X_1, \ldots, X_m]$ will denote the polynomial ring in $m$ indeterminates with coefficients in $R$. %{\color{red} Give also formal definition of what a finitely generated subgroup of $R$ is. See red comment below.}
	An (additive) subgroup generated by $\sigma_1, \cdots, \sigma_q \in R$ is the smallest (inclusion-wise) additive subgroup of $R$ that contains $\sigma_1, \cdots, \sigma_q$. Equivalently, it corresponds to the set $\{ \lambda_1 \sigma_1 + \cdots + \lambda_q \sigma_q : \lambda \in \mathbb{Z}^{q} \}$.

	\begin{definition}[Piecewise polynomial]\label{def:piecewise-poly}
		Let $m$ be a positive integer. A function $f: \mathbb{R}^{m} \rightarrow \mathbb{R}$ is piecewise polynomial iff there exist multivariate polynomials $P_1, \cdots, P_r \in \mathbb{R}[X_1, \cdots, X_m]$ such that for any  $x \in \mathbb{R}^m$, there exists $i\in\{1, \cdots, r\}$ such that $f(x)=P_i(x)$. A  \textit{polynomial region} $A$ of $f$ is a set such that there exists $i\in [r] $ such that $ A \subseteq \{ x \in \mathbb{R}^{m}: f(x) = P_i(x)\}$. The degree of a piecewise polynomial function $f$ is $\mathsf{deg}(f):=\max\{\mathsf{deg}(P_1), \cdots, \mathsf{deg}(P_r)\}$. The {\em number of polynomial pieces} of a piecewise polynomial $f$ is the smallest $r$ such that $f$ can be represented as above.
	\end{definition}
	
	\begin{definition}[Finitely generated polynomial]\label{deffintelygeneratedp}
		Let $q$ be a positive integer. A multivariate polynomial $P$ is $q$-generated provided there exist reals $\sigma_1, \cdots, \sigma_q$ such that $P = \sum_{\alpha \in S } \gamma_{\alpha} X^{\alpha}$, where $\gamma_{\alpha} $ belongs to the additive subgroup of $\mathbb{R}$ generated by $\sigma_1, \cdots, \sigma_q$, %{\color{red}This notation suggests you are considering the {\em subring} of $\R$ generated by $\sigma_1, \ldots, \sigma_q$. I think you want only the {\em subgroup under addition} generated by these elements? In any case, this notation should be clarified (or new notation introduced) because it can be confusing for a reader familiar with the more standard use of this notation.} 
		and $S$ is the set of exponents of $P$.  Under these conditions, we say that $P$ is $q$-generated by the reals $\sigma_1, \cdots, \sigma_q$.
	\end{definition}
	%\begin{remark}  If a polynomial is $q$-generated, then it is $(q+1)$-generated.
	%\end{remark}

	\begin{definition}[Embedding, equivariance, and refinement]
		Given a set $X$, an embedding $\xi$  is a function that takes as input a graph $G$ and a vertex $v\in V(G)$, and returns an element $\xi(G,v)\in X$. An embedding is \emph{equivariant} if and only if for any pair of isomorphic graphs $G, G'$, and any isomorphism $f$  from G to $G'$, it holds that
		$\xi(G,v) = \xi(G',f(v))$. We say that $\xi$ refines  $\xi'$ if and only if for any graph $G$ and any $v \in V(G), \xi(G, v) = \xi(G, v') \implies \xi'(G, v) = \xi'(G,v')$.
		When the graph $G$ is clear from context, we use $\xi(v)$ as shorthand for $\xi(G,v)$.
	\end{definition}

	\begin{definition}[Color refinement]  Given a graph $G$, and $v \in V(G)$, let  $(G, v) \mapsto \mathsf{col(G,v)}$ be the function which returns the color of the node $v $. The color refinement refers to a procedure that returns a sequence of equivariant embeddings $cr^{t}$, computed recursively as follows:
		
		- $cr^{0}(G,v) = \mathsf{col}(G,v)$ 
		
		-  For $t\geq 0$, $ \mathsf{cr}^{t+1}(G,v) := ( \mathsf{cr}^{t}(G,v), \{\{  \mathsf{cr}^{t}(G,w): w \in N(v) \}\}) $
		
		In each round, the algorithm computes a coloring that is finer than the one computed in the previous round, that is, $\mathsf{cr}^{t+1}$ refines $ \mathsf{cr}^{t}$. 
		For some $t \leq n := \lvert G\rvert $, this procedure stabilises: the coloring does not become strictly finer anymore.
	\end{definition}
	%Hence, there is a least $t_1 < n$ {\color{red}Do you mean largest $t_1$? The least one would be $t_1 = 0$ even if there are many refinements in subsequent iterations.} such that $\mathsf{cr}^{t_1 +1}(G) $ refines $\mathsf{cr}^{t_1}(G)$. 
	
	\begin{remark}
		We refer the reader to the seminal work \cite[Sections 2 and 5]{cai1992optimal} for comments about the history and connections between the color refinement and Weisfeiler-Lehman algorithms.
	\end{remark}
	
	% {\color{red}Color refinement is a generalization only for 1-dimensional WL, right? And even then, is it really a generalization? SAMMY: we are only talking here about the 1-WL. I suggest we avoid to mention any more details than the ones moved into remark 1, because they are not so central for what we are doing. }
	
	\begin{definition}[Graph Neural Network (GNN)]
		A GNN is a recursive embedding of vertices of a labelled graph by relying on the underlying adjacency information and node features. 
		Each vertex $v$ is attributed an indicator vector $\xi^{0}(v)$ of size $\ell$, encoding the color of the node $v$: the colors being indexed by the palette $\{1, \cdots, \ell\}$, $\xi^{0}(v)=e_i$ (the $i$-th canonical vector) if the color of the vertex $v$ is $i$. The GNN is fully characterized by: 
		
		%$\circ$ An aggregation function $\mathsf{agg}$ whose domain is composed of the finite multisets with ground set $\mathbb{R}^{p}$, and whose codomain is $\mathbb{R}^{q}$.
		
		$\circ $ A combination function $\mathsf{comb}: \mathbb{R}^{2\ell}\longrightarrow \mathbb{R}^{\ell}$ which is a feedforward neural network with given activation function $\sigma:\mathbb{R}\longrightarrow \mathbb{R}$.

		$\circ$ The update rule of the GNN at iteration $t \in \mathbb{N}$ for any labelled graph $G$ and vertex $v \in V(G)$, is given as follows:
		$$\xi^{0}(v) \text{ is the indicator vector of the color of $v$}, \quad \xi^{t+1}(v) = \mathsf{comb} (\xi^{t}(v), \sum_{w \in N(v)}\xi^{t}(w) ) $$
	\end{definition}

	\begin{remark} This type of GNN is sometimes referred to as a \textbf{recurrent  GNN}. The general definition (cf. for instance \cite{grohe2021logic}) usually considers a sequence of combine and aggregation  functions which may depend on the iteration $t$. The aggregation functions replace the sum over the neighborhood, i.e. at each iteration $\mathsf{comb}(\xi^{t}(v), \mathsf{agg}(\{ \{ \xi^{t}(w) : w  \in  N (v)  \} \}))$ is the new embedding for vertex $v$. It has been proved in \cite{xu2018powerful} that for any  $\mathsf{agg}$ function, there is a GNN (of potentially larger size) whose aggregation function is  the summation and which refines any GNN with this aggregation function. The results of this article extend to GNNs whose combination and aggregation functions are allowed to be different in different iterations, but are  multivariate piecewise polynomials. For ease of presentation, we restrict to %\sout{this definition} 
		recurrent GNNs. 
	\end{remark}

	Given these definitions, we can now formally state the previously known results about the expressivity of unbounded GNNs (Theorems \ref{theorem:refinement} and \ref{theorem:1}). Namely, in Theorem \ref{theorem:1}, the size of the GNN is allowed to grow with $n$.

	\begin{theorem}\label{theorem:refinement}\cite{grohe2021logic,xu2018powerful,morris2019weisfeiler}
		Let $d \geq 1$, and let $\xi^{d}$ be an {embedding } computed by a GNN after $d$ iterations. Then $\mathsf{cr}^{d}$ refines $\xi$, that is, for all graphs $G, G'$ and vertices $v \in V(G)$, $v' \in V(G')$, $\mathsf{cr}^{d}(v)=\mathsf{cr}^{d}(v') \implies \xi^{d}(G,v)=\xi^{d}(G',v')$.
	\end{theorem}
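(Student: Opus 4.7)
The plan is to proceed by induction on the number of iterations $d$, exploiting the fact that $\mathsf{cr}^{t+1}(G,v)$ is defined to encode exactly the information $\bigl(\mathsf{cr}^{t}(G,v),\{\!\{\mathsf{cr}^{t}(G,w) : w \in N(v)\}\!\}\bigr)$, which is designed to be the coarsest refinement that a sum-aggregation GNN can possibly distinguish.

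For the base case $d=0$, note that $\mathsf{cr}^{0}(G,v) = \mathsf{col}(G,v)$ and $\xi^{0}(v)$ is by definition the indicator vector of $\mathsf{col}(G,v)$. So the map $\mathsf{col}(G,v) \mapsto \xi^{0}(v)$ is a well-defined function, and $\mathsf{cr}^{0}(G,v)=\mathsf{cr}^{0}(G',v')$ immediately forces $\xi^{0}(v)=\xi^{0}(v')$.

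For the inductive step, I would assume that $\mathsf{cr}^{t}$ refines $\xi^{t}$, and suppose $\mathsf{cr}^{t+1}(G,v) = \mathsf{cr}^{t+1}(G',v')$. Unpacking the definition of $\mathsf{cr}^{t+1}$ yields two facts: first, $\mathsf{cr}^{t}(G,v) = \mathsf{cr}^{t}(G',v')$, which by the induction hypothesis gives $\xi^{t}(v) = \xi^{t}(v')$; second, the multisets $\{\!\{\mathsf{cr}^{t}(G,w) : w \in N(v)\}\!\}$ and $\{\!\{\mathsf{cr}^{t}(G',w') : w' \in N(v')\}\!\}$ are equal. Multiset equality provides a bijection $\pi: N(v) \to N(v')$ such that $\mathsf{cr}^{t}(G,w) = \mathsf{cr}^{t}(G',\pi(w))$ for every $w \in N(v)$; in particular, $|N(v)| = |N(v')|$. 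Applying the induction hypothesis vertex-by-vertex then gives $\xi^{t}(w) = \xi^{t}(\pi(w))$ for each $w \in N(v)$.

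Because the aggregation is summation, which is invariant under reindexing by $\pi$, I can conclude $\sum_{w \in N(v)} \xi^{t}(w) = \sum_{w' \in N(v')} \xi^{t}(w')$. Plugging the equal arguments $(\xi^{t}(v), \sum_{w \in N(v)} \xi^{t}(w))$ and $(\xi^{t}(v'), \sum_{w' \in N(v')} \xi^{t}(w'))$ into the deterministic function $\mathsf{comb}$ yields $\xi^{t+1}(v) = \xi^{t+1}(v')$, completing the induction. The only conceptual step that requires care is translating multiset equality into a neighbor-preserving bijection, but this is standard, and the rest is a direct consequence of permutation-invariance of the sum and the well-defined nature of the neural network $\mathsf{comb}$.
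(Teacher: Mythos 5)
The paper does not include its own proof of this theorem; it is stated as a known result and cited to \cite{grohe2021logic,xu2018powerful,morris2019weisfeiler}. Your inductive argument is correct and is exactly the standard proof from that literature: the base case uses that $\xi^0$ is a function of $\mathsf{cr}^0 = \mathsf{col}$, and the inductive step unpacks the definition of $\mathsf{cr}^{t+1}$, extracts a $\mathsf{cr}^t$-preserving bijection $\pi : N(v) \to N(v')$ from multiset equality, pushes it through the induction hypothesis (applied across the two graphs), uses permutation-invariance of the sum aggregator, and finally applies the deterministic map $\mathsf{comb}$. No gaps.

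One small point of rigor worth spelling out if you wanted to be fully formal: the induction hypothesis must be stated in the cross-graph form matching the theorem (namely, for all pairs $(G,w)$, $(G',w')$, $\mathsf{cr}^t(G,w)=\mathsf{cr}^t(G',w') \implies \xi^t(G,w)=\xi^t(G',w')$), since the paper's own Definition of ``refines'' quantifies over a single graph only. You implicitly use the two-graph version when applying the IH to $w$ and $\pi(w)$, and that is the right thing to do, but it is worth noting that this is slightly stronger than the in-graph definition the paper gives for ``refines.''
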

	\begin{theorem}\cite[Theorem VIII.4]{grohe2021logic}\cite{xu2018powerful,morris2019weisfeiler}\label{theorem:1} Let $n \in \mathbb{N}$. Then there is a recurrent GNN such that for all $t=0, \cdots, n$, the vertex invariant $\xi^{t}$ computed in the $t$-th iteration of the GNN refines $\mathsf{cr}^{t}$ on all graphs of order at most $n$.
	\end{theorem}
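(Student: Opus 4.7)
The plan is to proceed by induction on the iteration index $t$. For each fixed $n$, I would explicitly design an injective encoding of color classes together with a single feedforward network $\mathsf{comb}$ implementing one refinement step on all graphs of order at most $n$. The central finiteness observation is that on graphs of order at most $n$, the color refinement $\mathsf{cr}^{t}$ (for $t \le n$) takes only finitely many distinct values; enumerate them $c_1,\dots,c_M$, where $M$ depends on $n$.

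Next, I would fix an injective encoding $\phi : \{c_1,\dots,c_M\} \to \mathbb{R}^d$ that is \emph{multiset-injective under summation}: for any two multisets of size at most $n$ drawn from the image of $\phi$, distinct multisets yield distinct sums. Two convenient choices are the one-hot encoding (one coordinate per color class), and the base-$(n{+}1)$ encoding $c_i \mapsto (n{+}1)^{i-1}$, which lets the multiset of summands be read off by the base-$(n{+}1)$ expansion of the sum. I would additionally reserve one coordinate to carry the iteration index, so that inputs to $\mathsf{comb}$ at different iterations lie in disjoint regions of $\mathbb{R}^{2d}$; since all summands at a single round share the same value in that coordinate, both the iteration and the degree $|N(v)|$ can be recovered from the sum. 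Set $\xi^{t}(v) := \phi(\mathsf{cr}^{t}(v))$; by injectivity of $\phi$, $\xi^{t}$ trivially refines $\mathsf{cr}^{t}$. The recursive definition of $\mathsf{cr}^{t}$ together with multiset-injectivity then guarantees that the pair $(\xi^{t-1}(v),\ \sum_{w \in N(v)} \xi^{t-1}(w))$ uniquely determines $\mathsf{cr}^{t}(v)$, hence the desired $\xi^{t}(v)$.

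What remains is the main technical obstacle: realizing the resulting function $F : S \to \mathbb{R}^d$, where $S \subset \mathbb{R}^{2d}$ is the finite set of input pairs encountered across all graphs of order at most $n$ and all iterations $t \le n$, by a feedforward neural network that can be used as $\mathsf{comb}$. One needs an \emph{exact} match on $S$, not merely an approximation, since the refinement property requires equality of embeddings to correspond to equality of colors. For ReLU activations this is standard: every continuous piecewise-linear function is exactly realized by a sufficiently large ReLU network, and one can directly interpolate any prescribed values on a finite set using sums of localized bump functions separating the points of $S$. The same $\mathsf{comb}$ is used at every iteration; the iteration-index coordinate ensures that the rules for different rounds do not interfere. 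Equivariance of $\xi^{t}$ is then immediate from the permutation-invariance of summation, and the refinement claim follows from $\xi^{t} = \phi \circ \mathsf{cr}^{t}$ together with the injectivity of $\phi$, establishing the theorem.
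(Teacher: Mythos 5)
The paper cites this result to \cite{grohe2021logic,xu2018powerful,morris2019weisfeiler} and does not give a proof of its own, so there is no internal argument to compare against. That said, your sketch reproduces the standard construction from those references: on graphs of order at most $n$ and at most $n$ rounds, $\mathsf{cr}^{t}$ assumes only finitely many values; one picks an encoding $\phi$ of those values that is injective and remains injective under summation of at most $n$ terms (your base-$(n{+}1)$ device), defines the combine map on the resulting finite set of input pairs by what the recursive definition of $\mathsf{cr}^{t+1}$ forces, and realizes that map exactly by a ReLU network via piecewise-linear bump interpolation. Two points you could tighten. First, the iteration-index coordinate is superfluous: $\mathsf{cr}^{t}$-values for distinct $t$ are nested tuples of different depths and therefore never coincide, so a single injective $\phi$ on the union of all color classes, together with multiset-injectivity of the sum, already disambiguates the round without any spare coordinate — and this matters, since the paper's $\mathsf{comb}$ is $\mathbb{R}^{2\ell}\to\mathbb{R}^{\ell}$, leaving no extra slot when $\ell=1$. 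Second, the paper initializes $\xi^{0}(v)$ as the one-hot color vector rather than as $\phi(\mathsf{cr}^{0}(v))$, so you should either define $\phi$ on the depth-zero classes to agree with the one-hot encoding or note that the first application of $\mathsf{comb}$ translates between the two; this is bookkeeping, not a gap. With those adjustments your argument is correct and is, to our knowledge, in the same spirit as the proofs in the cited works.
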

	In contrast, we prove Theorems \ref{theorem:uniform} and \ref{result:other:activations} for bounded GNNs:
	\begin{theorem}\label{theorem:uniform} For any GNN, i.e., choice of combination function, represented by a feedforward neural network with piecewise polynomial activation, and any natural number $I \in \mathbb{N}$, there exists a pair of rooted trees $T$ and $T'$ (unicolored, i.e. $\ell=1$) of depth two with root nodes $s$ and $s'$ respectively such that:
		\begin{itemize}
			\item $\mathsf{cr}^{2}(T,s) \neq \mathsf{cr}^{2}(T',s')$, i.e. $s$ and $s'$ can be distinguished with color refinement in two iterations.
			\item $\xi^{t}(T,s) = \xi^{t}(T',s')$ for all $t \leq I$, i.e., $s$ and $s'$ cannot be distinguished by the GNN until iteration $I+1$.
		\end{itemize} 
		%{\color{red} We should mention somewhere that there is only one color that is assigned to all vertices at the beginning. This creates a bit of confusion in Lemma~\ref{lemma:polyfunction} because the embedding dimension is not clear.}
	\end{theorem}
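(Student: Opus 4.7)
The plan is to reduce the theorem to a Prouhet--Tarry--Escott type problem: find two different $k$-element multisets of integers whose power sums agree up to a large degree. For a depth-two unicolor tree whose root $s$ has $k$ children, the $i$-th carrying $d_i$ leaves, I would unwind the GNN recursion into three scalar sequences: the root embedding $R_t$, the embedding $F_t(d)$ of an internal node with $d$ leaves, and the leaf embedding $L_t(d)$. They evolve as
\begin{align*}
R_{t+1} &= \mathsf{comb}\Big(R_t,\; \sum_{i=1}^k F_t(d_i)\Big),\\
F_{t+1}(d) &= \mathsf{comb}\big(F_t(d),\; R_t+d\,L_t(d)\big),\\
L_{t+1}(d) &= \mathsf{comb}\big(L_t(d),\; F_t(d)\big),
\end{align*}
starting from $R_0=F_0(d)=L_0(d)=1$. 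Since $F_t,L_t$ depend on the multiset $\{d_i\}$ only through $R_{t-1}$, a simple induction gives the key reduction: for two trees with the same root degree $k$, if $R_s=R_s'$ for every $s\le t$, then $F_{t+1}$ and $L_{t+1}$ are identical functions of $d$ for both trees, and $R_{t+1}=R_{t+1}'$ iff $\sum_i F_t(d_i)=\sum_i F_t(d_i')$.

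Next I would bound the degree of $F_t$. Letting $D_0$ be the total degree of $\mathsf{comb}$ inside any single polynomial piece, an induction shows that on any cell of $\mathbb{R}$ where each intermediate call to $\mathsf{comb}$ stays in one piece, $F_t(d)$ is a polynomial of degree at most $D_0^{\,t}$. It therefore suffices to exhibit two distinct $k$-element multisets $\{d_i\}\neq\{d_i'\}$ of positive integers, all lying in such a common cell, whose power sums agree up to degree $D_0^{\,I-1}$. The classical Thue--Morse construction for the Prouhet--Tarry--Escott problem delivers exactly this: it splits $\{0,1,\ldots,2^M-1\}$ by the parity of the binary digit sum into two multisets whose first $M-1$ power sums coincide, and taking $M=D_0^{\,I-1}+1$ suffices.

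The hard part is to force all of $d_i,d_i'$ into a common single-piece cell of every $F_t$, $t\le I-1$, simultaneously, because the piece boundaries of $F_t$ themselves depend on the $R_s$'s we are constructing. I would resolve this by translating the Thue--Morse multisets by a large integer $N$; the binomial expansion preserves equality of power sums up to degree $D_0^{\,I-1}$ under $d_i\mapsto d_i+N$. For large $N$, the point $(F_{t-1}(d),R_{t-1}+d\,L_{t-1}(d))$ fed to $\mathsf{comb}$ traces, as $d$ ranges over the translated multiset, a short arc of an algebraic curve that crosses the finitely many piecewise boundaries of $\mathsf{comb}$ only in a bounded region, hence eventually lies in a single unbounded polynomial cell. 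The main obstacle will be propagating this single-piece property uniformly across all $I$ iterations: one has to check that one $N$ works for every $t\le I$, while tracking how $R_t$ (itself a function of $N$) shifts the subsequent evaluation points and their associated boundary crossings. Once this is done, the two resulting trees $T,T'$ satisfy $R_t(T)=R_t(T')$ for all $t\le I$, while $\mathsf{cr}^2$ already separates their roots since the child-degree multisets $\{d_i\}$ and $\{d_i'\}$ differ.
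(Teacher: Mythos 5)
Your proposal heads in a recognizably different direction from the paper: you try to make the argument constructive by pinning down the two trees explicitly (Thue--Morse / Prouhet--Tarry--Escott multisets, translated by a large $N$), whereas the paper's proof is a pure counting argument. The paper shows that the root embedding $\xi^t(T[k_1,\dots,k_m],s)$ is a piecewise polynomial of $(k_1,\dots,k_m)$ whose pieces have degree and ``number of generators'' bounded independently of $m$, then uses pigeonhole twice: first to find a single polynomial region containing at least $\tfrac{1}{r^{|I|}}\binom{M+m-1}{m}$ non-permutation-equivalent integer points, and second (via Lemma~\ref{lemmanbvaluesgeneratedp}) to show that a $q$-generated polynomial of bounded degree cannot take that many distinct values on the box $\{1,\dots,M\}^m$ once $m > q^2|I|$. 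It never needs to control \emph{which} piece the chosen points fall into, only that enough of them fall into the \emph{same} piece.

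The gap in your argument is precisely the step you flag as ``the hard part,'' and it is a real gap, not a technicality. You want $N$ large enough that, for every $t\le I$, the window $[N,N+2^M)$ lies entirely in the terminal (unbounded) polynomial piece of $F_t$ and $L_t$, so that equal power sums up to degree $D_0^{I-1}$ force $\sum_i F_t(d_i+N)=\sum_i F_t(d_i'+N)$. But the break points of $F_t$ are not fixed: they are determined by where the curve $d\mapsto(F_{t-1}(d),\,R_{t-1}+d\,L_{t-1}(d))$ crosses the region boundaries of $\mathsf{comb}$, and $R_{t-1}$ itself depends on $N$ (indeed $R_{t-1}$ involves a sum of $m=2^{M-1}$ terms each of size roughly $N^{\deg F_{t-2}}$, so it can grow like a large power of $N$). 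A boundary of $\mathsf{comb}$ of the form $\{y=c\}$ produces a break point of $F_t$ where $R_{t-1}+d\,L_{t-1}(d)=c$; when $R_{t-1}$ grows faster than $d\,L_{t-1}(d)$ does at $d\approx N$, this break point sits \emph{above} $N$, and increasing $N$ pushes it further out rather than safely to the left. So there is no monotone ``take $N$ large'' fix here; one would need a genuine fixed-point argument controlling all the self-referential degree growth across the $I$ iterations, and you have not supplied it. Your phrase ``crosses the finitely many piecewise boundaries of $\mathsf{comb}$ only in a bounded region'' is where the hand-wave happens: the boundaries of $\mathsf{comb}$ are unbounded semi-algebraic sets, and the arc itself escapes to infinity with $N$, so finiteness of crossings does not by itself place them below $N$.

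Two smaller points. First, your claimed degree bound $D_0^t$ for $F_t$ is too optimistic: $F_{t+1}$ depends on $R_t + d\,L_t(d)$, and $L_t$ in turn feeds back through $F_t$, so the correct recursion couples $\deg F_t$ and $\deg L_t$; a bound of the shape $D_0^{O(t)}$ still holds, which is all you need, but the exponent bookkeeping should be redone. Second, note that if the ``single piece'' issue were resolved, your approach would actually yield something the paper's does not --- an explicit pair of trees --- so the strategy is not without merit; but as written, the central step is unproven and the paper's pigeonhole route is designed exactly to avoid having to prove it.
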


	\begin{theorem}\label{result:other:activations}
		In two iterations, a  single neuron perceptron with an activation $\sigma \in \{ \exp, \text{\normalfont sigmoid}, \cosh, \sinh, \tanh \}$ can distinguish the root nodes of any pair of non-isomorphic rooted trees of depth two.\end{theorem}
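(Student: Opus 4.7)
The plan is to reduce distinguishability of the two roots to a $\mathbb{Z}$-linear independence statement about the values $\sigma(1),\sigma(2),\ldots$, and then to establish this via the transcendence of $e$ (a corollary of the Lindemann--Weierstrass theorem mentioned in the abstract).

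Taking the trees unicolored (the general case is analogous), a rooted tree $T$ of depth two is determined up to rooted isomorphism by the multiset $\mu(T) = \{\{k_1,\ldots,k_r\}\}$ that records, for each child of the root, the number of grandchildren below it. Using the single-neuron perceptron $\mathsf{comb}(x,y) = \sigma(y)$ (i.e.\ weights $a=c=0$, $b=1$) one has $\xi^0(v) \equiv 1$, $\xi^1(v) = \sigma(\deg v)$, and at the root $s$,
\[
\xi^2(T,s) \;=\; \sigma\!\left(\sum_{n\ge 1} m_n(T)\,\sigma(n)\right), \qquad m_n(T) := |\{i : k_i + 1 = n\}|.
\]
Each $\sigma$ in the list is injective on the range in which the inner sum lies ($[r,\infty)$ for $\cosh$, where $r$ is the root degree, and all of $\mathbb{R}$ for the others), so $\xi^2(T,s) = \xi^2(T',s')$ together with $T \not\cong T'$ forces a nontrivial integer linear relation
\[
\sum_{n \ge 1}\bigl(m_n(T) - m_n(T')\bigr)\,\sigma(n) \;=\; 0.
\]
Hence it suffices to prove the following claim: \emph{for every $\sigma \in \{\exp, \mathrm{sigmoid}, \cosh, \sinh, \tanh\}$, the real numbers $\sigma(1), \sigma(2), \sigma(3), \ldots$ are linearly independent over $\mathbb{Z}$.}

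For $\sigma = \exp$ this is immediate since $e^1, e^2,\ldots$ are $\mathbb{Z}$-linearly independent powers of the transcendental number $e$. For $\sigma \in \{\sinh,\cosh\}$, write $\sigma(n) = \tfrac12(e^n \pm e^{-n})$; a finite integer relation $\sum_{n=1}^{N} c_n \sigma(n) = 0$, multiplied by $2e^{N}$, becomes a polynomial identity in $\mathbb{Z}[e]$ whose monomials split into the disjoint exponent ranges $[N+1, 2N]$ and $[0, N-1]$, isolating each $c_n$ as a coefficient of $P(e)$; transcendence of $e$ then forces every $c_n = 0$. For the remaining two activations, $\sigma(n)$ is a rational function of $e^n$: setting $u = e^{-1}$ for sigmoid, respectively $v = e^{-2}$ for $\tanh$, and clearing denominators turns the relation into an integer polynomial identity $R(u) = 0$; transcendence of $u$ implies $R \equiv 0$ in $\mathbb{Z}[u]$, and the remaining purely algebraic task is to show that the $c_n$ must all vanish. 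For sigmoid, the polynomials $\prod_{j \neq n}(1+u^{j})$ have strictly decreasing degrees in $n$, so a leading-coefficient induction peels off the $c_n$'s one by one. For $\tanh$, the expansion $\tfrac{v^n - 1}{v^n + 1} = -1 + 2\sum_{m\ge 1}(-1)^{m-1} v^{mn}$ lets one read off $c_n$ from the $v^n$-coefficient of $R$ by strong induction on $n$.

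The main obstacle is the sigmoid and $\tanh$ cases: unlike $\exp$, $\sinh$, and $\cosh$, the rational denominators $1+e^{-n}$ or $1+e^{-2n}$ block a direct polynomial-in-$e$ identity and force one to first produce an identity in $\mathbb{Z}[u]$ and then argue combinatorially that it cannot hold nontrivially. Everything else---parametrizing depth-two trees by multisets, injectivity of $\sigma$ on the relevant intervals, and passing from a nonzero polynomial to a nonzero evaluation at a transcendental point---is routine once the reduction is in place.
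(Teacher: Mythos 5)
Your proposal is correct and follows essentially the same strategy as the paper: parametrize depth-two trees by the multiset of depth-one degrees, write $\xi^2(s)$ as $\sigma$ applied to an integer combination of $\sigma(n)$'s, and reduce distinguishability to $\mathbb{Z}$-linear independence of $\sigma(1),\sigma(2),\ldots$, established via Lindemann--Weierstrass. There are, however, two points worth flagging.

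First, a genuine (if minor) difference in the combinatorial argument for sigmoid and $\tanh$. The paper clears denominators and obtains an identity in which all exponents $\alpha_S + \alpha'_T$ are \emph{positive} integers, then applies Lindemann--Weierstrass directly to the resulting linear relation among exponentials at distinct algebraic points (Lemma~\ref{lemmasumexp}), finishing with an induction that strips off the smallest exponent. You instead substitute $u = e^{-1}$ (resp.~$v = e^{-2}$), pass to a polynomial identity $R(u)=0$ in $\mathbb{Z}[u]$, invoke transcendence of $e$ once to conclude $R\equiv 0$, and then argue purely algebraically (degree/coefficient induction) that all $c_n$ vanish. The two routes are equivalent in strength; your version cleanly separates the number-theoretic input (transcendence of $e$, used once) from the combinatorics, whereas the paper leans on the stronger Lindemann--Weierstrass statement about linear independence of $e^{\alpha_1},\ldots,e^{\alpha_k}$ for distinct algebraic $\alpha_i$. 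Your version is arguably more elementary for $\exp,\sinh,\cosh$, where only transcendence of $e$ is needed; the paper's Lemma~\ref{lemmasumexp} is stated at greater generality than what is actually used there.

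Second, a small imprecision that does not affect correctness but should be cleaned up: you set $\mathsf{comb}(x,y)=\sigma(y)$ so that $\xi^1(v)=\sigma(\deg v)$ and $\xi^2(s)=\sigma\bigl(\sum_{w\sim s}\sigma(\deg w)\bigr)$, i.e.\ the inner sum indeed ranges only over the $\sigma(k_i+1)$'s. But then you need the resulting relation to be \emph{nontrivial}, which requires that $\deg s = m$ not coincide with some $k_i+1$ in a way that cancels; with your choice of $\mathsf{comb}$ the root's own contribution $\sigma(1+\deg s)$ does not appear at all in $\xi^2(s)$, so two trees with different numbers of children at the root but the same multiset of $k_i+1$ among those children would not be distinguished at iteration two by your specific perceptron (e.g.\ the depth-two tree with degree sequence $(k_1,\ldots,k_m)$ at depth one versus the same tree with an extra leaf adjacent to the root --- the extra leaf is a depth-one child with $k=1$, contributing $\sigma(2)$). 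The paper avoids this by using $\mathsf{comb}(x,y)=\sigma(x+y)$, so that $\xi^2(s)=\sigma\bigl(\sigma(1+m)+\sum_i\sigma(k_i+1)\bigr)$ and the term $\sigma(1+m)$ encodes the root degree (the extra leaf changes both $m$ and adds a $\sigma(2)$ term, and nontriviality still needs a brief check, which the paper handles by grouping terms with multiplicity before applying Lindemann--Weierstrass). Your choice of $\mathsf{comb}$ should be changed to $\sigma(x+y)$, or else you should argue separately that your perceptron distinguishes roots of different degree, which is not automatic from your $\mathbb{Z}$-independence claim alone.
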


	\section{Overview of the proofs}\label{sec:3}
	
	To establish our first result, we will use rooted trees of the form shown in Figure~\ref{figpolytreecounter} which is a tree of depth two whose depth one vertices have prescribed degrees $k_1, \ldots, k_m$, with $k_1, \ldots, k_m \geq 1$. Given a GNN with piecewise polynomial activation and a natural number $I\in \mathbb{N}$, we will show that there exist two sets of integers $k_1, \cdots, k_m$ and $k'_1, \cdots, k'_m$ that are not the same up to permutations, such that for the corresponding rooted trees $T[k_1, \cdots, k_m]$ and $T[k'_1,\cdots,k'_m]$, the GNN cannot distinguish $s$ and $s'$ for the first $I$ iterations, i.e. $\xi^{t}(T,s) = \xi^{t}(T',s')$ for any $t \in \{1, \cdots, I\}$. %The tree structure $T[k_1, \cdots, k_m]$ is depicted in Figure \ref{figpolytreecounter}. 
	Note that the natural numbers $m$, and $k_1, \cdots, k_m$ and $k'_1, \cdots, k'_m$ will depend on $I$, the activation and the size of the neural network considered.
	
	The proof of the first result is structured as follows. Since the trees are parameterized by $m$-tuples of integers $k_1, \ldots, k_m$, the embedding of the root node computed by the GNN at any iteration is a function of these $m$ integers. Since the activations are piecewise polynomials, these embeddings of the root node are also piecewise multivariate polynomial functions of $k_1, \ldots, k_m$. We further prove that the \emph{complexity} of its polynomial pieces can be controlled uniformly in the sense of Definition \ref{deffintelygeneratedp}: the number of generators of such polynomial is independent of $m$, but depends only on the number of iterations of the GNN and the underlying neural network (Lemma \ref{compositiongenerationpol} and \ref{lemma:polyfunction}). Then, we show that there exists a large enough region of $\mathbb{R}^m$ on which this piecewise polynomial function is evaluated by the {\em same} polynomial. This region is large enough in the following sense: we prove that for a dimension $m $ that is sufficiently large, the region contains more integral vectors than the number of possible values a $q$-generated polynomial with degree at most $q$ can take on these vectors, even after identifying vectors up to permutations of the coordinates (Lemmas \ref{lemmanbvaluesgeneratedp} and \ref{lemma:polycollision}).
	This implies that the polynomial piece will take the same value on two distinct integral vectors whose coordinates are not identical up to permutations. %The proof of this statement follows from  the pigeonhole principle applied to a well-chosen function. 
	When translating this result back to the world of GNNs, this means that the two embeddings of the root nodes of the trees corresponding to these two vectors will coincide. To conclude a separation between bounded and unbounded GNNs, we justify that the unbounded ones can seperate these two vertices. This is based on the previous result (Theorem \ref{theorem:1}) stating that unbounded GNNs refine color refinement.
	
	Our second result states that for activations that are not piecewise polynomial, a one neuron perceptron GNN can distinguish the root nodes of any pair of nonisomorphic trees of depth two. %\sout{We first observe that in that case, that the equality $\xi^{t}(s)=\xi^{t}(s)$ corresponds to a simple necessary condition on the integers $k_1, \cdots, k_m$ and $k'_1, \cdots, k'_m$. Then we prove in Lemma \ref{lemma:arg:equal:exp} that this condition cannot be met for a large family of activation functions, including the exponential, sigmoid, hyperbolic sine, cosine and tan. This will follow from the Lindemann-Weirstrass Theorem in transcendental number theory (Lemma \ref{lemmasumexp} and Theorem \ref{lindermannweierstrassth}).}
	In particular, we prove this when the activation function is the exponential, the sigmoid or the hyperbolic sine, cosine or tangent functions. This is done by showing that the condition $\xi^{2}(s)=\xi^{2}(s)$ corresponds to a relation between the exponentials of the integers $k_1, \cdots, k_m$ and $k'_1, \cdots, k'_m$. Applying the Lindemann-Weirstrass Theorem in transcendental number theory (Lemma \ref{lemmasumexp} and Theorem \ref{lindermannweierstrassth}) leads to the conclusion that $k'_1, \ldots, k'_m$ must be a permutation of $k_1, \ldots, k_
	m$, showing that the trees are isomorphic.

	\begin{figure}[h]
		\centering
		\begin{tikzpicture}
	\begin{scope}[xshift=4cm]
	\node[main node] (1) {$s$};
	\node[main node] (2) [below left = 1.5cm and 4cm  of 1] {$x_1$};
	\node[main node] (3) [right = 2cm  of 2] {$x_2$};
	\node[main node] (4) [below left = 1.5cm  of 2] {};
 \node[] (9) [right = 0.5cm  of 4] {};
 %\node[] (10) [right = 0.25cm  of 9] {};
 \node[main node] (11) [right = 1cm  of 9] {};
 
 \node[main node] (5) [right = 2.5cm  of 4] {};
 \node[] (6) [right = 0.5cm  of 5] {};
 \node[] (7) [right = 0.5cm  of 6] {};
 \node[main node] (8) [right = 0.5cm  of 7] {};
 %\node[label=1] (9) [right = 0.5cm  of 8] {}; 
    \draw[-] (1) edge node[right] {} (3);
	\draw[-] (2) edge node[right] {} (1);
	\draw[-] (2) edge node[right] {} (4);
    \draw[-] (3) edge node[right] {} (5);
    \draw[densely dotted] (3) edge node[right] {} (6);
    \draw[densely dotted] (3) edge node[right] {} (7);
    \draw[-] (3) edge node[right] {} (8);
    \draw[-] (2) edge node[right] {} (11);
    %\draw[densely dotted] (2) edge node[right] {} (10);
    \draw[densely dotted] (2) edge node[right] {} (11);
    \draw[densely dotted] (2) edge node[right] {} (9);

%%%%%%%%%%%%%%%%%%%%%%%%%%%%%%%%%%%%%%%%%%%%%%%%%%%%%%%%%%%%%%%%%%%%%%%%%%%%%%%%%%%%%%%%%%%%%%%%%%%%%%%%%%%%%%%%%%%%%%%%%%%%%%%%%%%%%%%%%

% x and tranparent one

 \node[] (10) [right = 3cm  of 3] {};
	
 \node[] (12) [right= 1.5cm  of 7] {};
 \node[] (13) [right = 0.5cm  of 12] {};
 \node[] (14) [right = 0.5cm  of 13] {};
 \node[] (15) [right = 0.5cm  of 14] {};

 \node[main node] (16) [right = 3cm  of 10] {$x_m$};
 
\node[main node] (17) [right = 1cm  of 15] {};
 \node[main node] (18) [right = 0.5cm  of 17] {};
 \node[main node] (19) [right = 0.5cm  of 18] {};
 \node[main node] (20) [right = 0.5cm  of 19] {};

\draw[densely dotted] (10) edge node[right] {} (1);
\draw[densely dotted] (10) edge node[right] {} (12);
\draw[densely dotted] (10) edge node[right] {} (13);
\draw[densely dotted] (10) edge node[right] {} (14);
\draw[densely dotted] (10) edge node[right] {} (15);

\draw[-] (1) edge node[right] {} (16);
\draw[-] (16) edge node[right] {} (17);
\draw[densely dotted] (16) edge node[right] {} (18);
\draw[densely dotted] (16) edge node[right] {} (19);
\draw[-] (16) edge node[right] {} (20);

%\draw[densely dotted] () edge node[right] {} (9);

%%%%%%%%%%%%%%%%%%%%%%%%%%%%%%%%%%%%%%%%%%%%%%%%%%%%%%%%%%%%%%%%%%%%%%%%%%%%%%%%%%%%%%%%%%%%%%%%%%%%%%%%%%%%%%%%%%%%%%%%%%%%%%%%%%%%%%%%%%%%%%%%%%%% 

%%%%%%%%%%%%%%%%%%%%%%%%%%%%%%%%%%%%%%%%%%%%%%%%%%%%%%%%%%%%%%%%%%%
%%%%%%%%%%%%%%%%%%%%%%%%%%%%%%%%%%%%%%%%%%%%%%%%%%%%%%%%%%%%%%%%%%%
% brackets

	 \draw [
	thick,
	decoration={
		brace,
		mirror,
		raise=0.5cm
	},
	decorate
	] (4.west) -- (11.east) 
	node [pos=0.5,anchor=north,yshift=-0.55cm] {$k_1-1$ vertices};

	\draw [
	thick,
	decoration={
		brace,
		mirror,
		raise=0.5cm
	},
	decorate
	] (5.west) -- (8.east) 
	node [pos=0.5,anchor=north,yshift=-0.55cm] {$k_2-1$ vertices};

\draw [
	thick,
	decoration={
		brace,
		mirror,
		raise=0.5cm
	},
	decorate
	] (12.west) -- (15.east) 
	node [pos=0.5,anchor=north,yshift=-0.55cm] {$\cdots$};

 \draw [
	thick,
	decoration={
		brace,
		mirror,
		raise=0.5cm
	},
	decorate
	] (17.west) -- (20.east) 
	node [pos=0.5,anchor=north,yshift=-0.55cm] {$k_m-1$ vertices}; 

%%%%%%%%%%%%%%%%%%%%%%%%%%%%%%%%%%%%%%%%%%%%%%%%%%%%%%%%%%%%%%%%%%%
%%%%%%%%%%%%%%%%%%%%%%%%%%%%%%%%%%%%%%%%%%%%%%%%%%%%%%%%%%%%%%%%%%%

	\end{scope}
	
	\end{tikzpicture}
 
   
		\caption{  $T[k_1, \cdots, k_m]$}
		\label{figpolytreecounter}
	\end{figure}
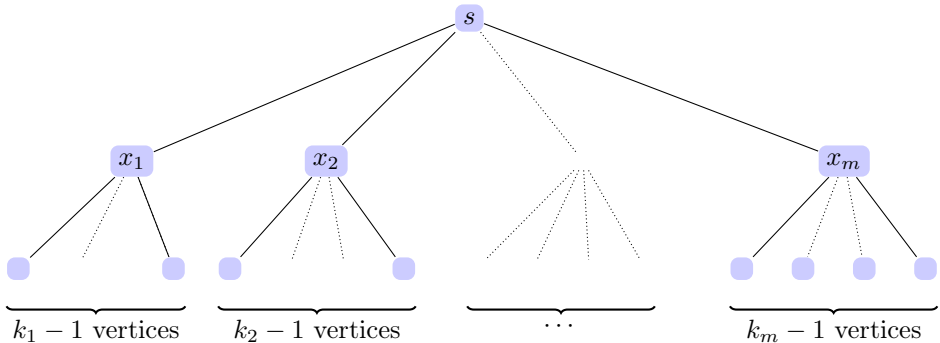

	\section{Collision with piecewise polynomial activations}\label{sec:4}

	\begin{lemma}\label{lemmanbvaluesgeneratedp}
		Let $m$, $q$ and $T$ be positive integers. For any natural number $M$, let $F_M$ be the box $\{ (k_1, \cdots, k_m) \in \mathbb{Z}^{m}: \forall i \quad 1 \leq k_i \leq M\}$ and let $\sigma_1, \cdots, \sigma_q$ be $q$ reals. Let $P_1, \cdots, P_T$ be $T$ polynomials of total degree less than $q$, and $q$-generated by  $\sigma_1, \cdots, \sigma_q$. % {\color{green}(each $P_j$ is generated by the same set $\sigma_1, \ldots, \sigma_q$ of reals)}. %{\color{red} The way you have stated things, $q$ and $T$ are independent, fixed numbers. So $q$ does not depend on $T$. Is this what you intend? For instance, do all $P_i$ have to have exactly the same set of generators?}. 
		Then the number of values taken by the function $ x \mapsto (P_1(x), \cdots, P_T(x)) $ on $F_M$ is at most 
		$$\left( 2 \left(\max_{i \in [q], j \in [T],\alpha \in S} \lvert \lambda^j_{\alpha, i} \rvert\right)  M^q \binom{m+q-1}{q} +1\right)^{qT} $$
		where $S$ is the union of the set of exponents of the $P_j$'s and the $\lambda^{j}_{\alpha,i }$ are the coefficients in the decomposition of their coefficients over the generators $\sigma_1, \cdots, \sigma_q$.
		
		%can be upperbounded by  a univariate polynomial of  $M$ of degree at most $q^2 T$ and with coefficients that depend only on $m$, $q$, and the decomposition of the coefficients of the $P_j$'s over the generators $\sigma_1, \cdots, \sigma_q$. %, the underlying neural network and the number of iteration C_{q,m}$ where $C_{q,m}$ is independent of $M$ (function of $q,m$ only).
	\end{lemma}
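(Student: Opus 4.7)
The plan is to reduce the counting of values of the tuple $(P_1(x), \ldots, P_T(x))$ over $x \in F_M$ to counting integer tuples obtained from polynomials with integer coefficients, exploiting the fact that $q$-generation provides an ``integer coordinate system'' for the values taken by the $P_j$'s.

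First, by the definition of $q$-generation, for each $j \in \{1, \ldots, T\}$ and each $\alpha \in S$ I write $\gamma^j_\alpha = \sum_{i=1}^q \lambda^j_{\alpha,i}\,\sigma_i$ with $\lambda^j_{\alpha,i} \in \mathbb{Z}$, and set $Q^j_i := \sum_{\alpha \in S}\lambda^j_{\alpha,i}\, X^\alpha \in \mathbb{Z}[X_1, \ldots, X_m]$. Rearranging the double sum gives $P_j(x) = \sum_{i=1}^q \sigma_i\, Q^j_i(x)$ for every $x$. Consequently, the map $x \mapsto (P_1(x), \ldots, P_T(x))$ factors as $x \mapsto \Phi(x) := (Q^j_i(x))_{j \in [T], i \in [q]} \in \mathbb{Z}^{qT}$ followed by the deterministic linear combination with fixed coefficients $\sigma_i$. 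Since the second step is a function, the number of distinct values of $(P_1, \ldots, P_T)$ over $F_M$ is at most $|\Phi(F_M)|$, which in turn is bounded by the product of the sizes of its coordinate projections:
\[
\bigl|\{(P_1(x), \ldots, P_T(x)) : x \in F_M\}\bigr| \;\leq\; |\Phi(F_M)| \;\leq\; \prod_{j \in [T],\, i \in [q]} \bigl|Q^j_i(F_M)\bigr|.
\]

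It then remains to bound each factor $|Q^j_i(F_M)|$. For $x \in F_M$ every coordinate of $x$ lies in $\{1, \ldots, M\}$, so each monomial satisfies $|x^\alpha| \leq M^{q}$ under the degree hypothesis. Writing $\Lambda := \max_{i,j,\alpha}|\lambda^j_{\alpha,i}|$ and using the standard estimate $|S| \leq \binom{m+q-1}{q}$ on the number of multi-indices of the total degrees involved, the triangle inequality yields
\[
|Q^j_i(x)| \;\leq\; \Lambda \binom{m+q-1}{q} M^{q} \;=:\; K.
\]
Because $Q^j_i(x)$ is an integer whenever $x$ is, $Q^j_i(F_M) \subseteq \{-K, -K+1, \ldots, K\}$ has cardinality at most $2K+1$. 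Multiplying $qT$ such bounds produces precisely the stated estimate.

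The main obstacle is conceptual rather than calculational: one must recognize that while the values $P_j(x)$ themselves live in an uncountable additive subgroup of $\mathbb{R}$, the function $\Phi$ carries all the information needed to reconstruct them through the fixed scalars $\sigma_i$, and its image consists of integer vectors whose coordinates are controllable by elementary means. Once this factoring observation is made, the remainder of the argument is a routine combination of the triangle inequality and counting integer points in an interval; the proof neither requires nor uses any arithmetic independence properties of the $\sigma_i$, which is consistent with the fact that the statement is a pure upper bound.
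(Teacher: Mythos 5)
Your proof is correct and follows essentially the same route as the paper's: you decompose each $P_j$ as $\sum_{i=1}^q \sigma_i Q^j_i$ with the $Q^j_i$ integer-coefficient polynomials (the paper calls them $f_{i,j}$), observe that the value map factors through the integer-valued $\Phi = (Q^j_i)_{i,j}$, bound $|\Phi(F_M)|$ by the product of the coordinate ranges, and bound each range via the triangle inequality. No meaningful difference from the paper's argument.
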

	\begin{proof}
		Note that if every $P_j$ is a polynomial with integer coefficients, i.e., each $P_j$ is generated by $1$, and has degree less than $q$, then the proof follows by considering the maximum and the minimum of the $P_j$'s over $F_M$, as each of them takes only integer values. To deal with the finitely generated case, we  reduce it to the integer case. If $S$ is the union of the set of exponents of $P_1, \ldots, P_T$, by assumption there exist integers $(\lambda^j_{\alpha, i})_{(\alpha,i) \in S \times \{1, \cdots, q\}}$ such that 
		\begin{align*}
		P_j =  \sum_{\alpha \in S} \gamma_{\alpha,j} X^{\alpha}   = & \sum_{\alpha \in S } \left( \sum_{i=1}^{q} \lambda^j_{\alpha, i} \sigma_{i}\right) X^{\alpha}  \\
		=& \sum_{i=1}^{q} \left( \sum_{\alpha \in S} \lambda^j_{\alpha,i}\sigma_i X^{\alpha} \right)   \\
		=& \sum_{i=1}^{q} \left( \underbrace{\sum_{\alpha \in S} \lambda^{j}_{\alpha,i} X^{\alpha}}_{:=f_{i,j}}  \right) \sigma_i 
		\end{align*}
		
		$f_{i,j}$ is a polynomial of degree less than $q$ with integer coefficients. As we wish to upper-bound the number of values of $\phi: x\mapsto (P_1(x), \cdots, P_T(x))$ on $F_M$, we consider
		\begin{align*}
		\Phi:& \quad F_M \longrightarrow \mathbb{Z}^{q T} \\
		&  \quad  x \mapsto \, (\,  f_{i,j}(x)\,)_{i \in [q], j \in [T]}
		\end{align*} 
		The number of values of $\Phi$ on $F_M$ controls the number of values of $\phi$ on $F_M$ as each of the coordinates of $\phi$ is a linear combination of the coordinates of $\Phi$ with fixed linear coefficients $\sigma_1, \ldots, \sigma_q$. We use the fact that each coordinate of $\Phi(x)$ is a multivariate polynomial with \emph{integer} coefficients: 
		
		\begin{align*}\rvert \phi(F_M) \rvert  \leq  \lvert \Phi (F_M) \rvert  & \leq   \prod_{i \in [q], j \in [T]}  ( \lvert \max_{x \in F_M }f_{i,j}(x)  -\min_{x \in F_M} f_{i,j}(x)\rvert +1)  \\ 
		& \leq \prod_{i \in [q], j \in [T]}  ( 2  \max_{x\in F_M} \lvert f_{i,j}(x) \rvert +1)  \\ 
		& \leq \left( 2 \left(\max_{i \in [q], j \in [T],\alpha \in S} \lvert \lambda^j_{\alpha, i} \rvert\right) M^q \binom{m+q-1}{q} +1\right)^{qT} 
		\end{align*}
		where the last inequality follows from the fact that $\binom{m+q-1}{q}$ is an upper-bound on the number of monomials of $f_{i,j}$.
	\end{proof}

	\begin{lemma}\label{lemma:polycollision}
		%{Let $q$ be a positive integer, and let  $(f_t: \mathbb{R}^{p} \rightarrow \mathbb{R} )_{p\in \mathbb{N}} $ be a sequence of symmetric piecewise polynomial functions satisfying $\mathsf{deg}(f_p) \leq q$ for all $p\in \mathbb{N}$ (bounded degree condition on each polynomial piece for any $p$). Then, there exists $m\in \mathbb{N}$ and two integral vectors $(k_1, \cdots, k_{m}) \in \mathbb{N}^{m}$ and $(k'_1, \cdots, k'_{m})\in \mathbb{N}^{m}$ that are not equal up to a permutation such ,  $ f_{m}(k_1, \cdots, k_{m}) = f_{m}(k'_1, \cdots, k'_{m})$. }\\
		{Let $q$ be a positive integer, let $I$ be a finite subset of $ \mathbb{N}$, and let  $(f_{t,m}: \mathbb{R}^{m} \rightarrow \mathbb{R} )_{(t,m)\in I\times \mathbb{N}} $ be a double sequence of piecewise polynomial functions satisfying:
			
			\begin{itemize}
				\item[i)] $\mathsf{deg}(f_{t,m}) \leq q$ for all $(t,m)\in I\times \mathbb{N}$ (bounded degree condition). 
				\item[ii)] for any  $m \in \mathbb{N}$ and $t\in I$, $f_{t,m}$ is $q$-generated.
			\end{itemize}
			Then, there exists $m\in \mathbb{N}$ and two integral vectors $(k_1, \cdots, k_{m}) \in \mathbb{N}^{m}$ and $(k'_1, \cdots, k'_{m})\in \mathbb{N}^{m}$ that are not equal up to a permutation such that for any $t \in I$,  $ f_{t,m}(k_1, \cdots, k_{m}) = f_{t,m}(k'_1, \cdots, k'_{m})$.}
	\end{lemma}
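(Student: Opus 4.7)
The plan is to apply the value-counting bound of Lemma~\ref{lemmanbvaluesgeneratedp} together with a pigeonhole argument on $\mathfrak{S}_m$-orbits in the integer box $F_M=\{1,\dots,M\}^m$. Set $T:=|I|$. For a fixed $m$ (to be chosen below), each $f_{t,m}$ has finitely many polynomial pieces; taking the common refinement over $t\in I$ partitions $\mathbb{R}^m$ into finitely many regions $R_1,\dots,R_L$ such that on every $R_j$ and every $t\in I$, $f_{t,m}$ coincides with a polynomial $P_{t,j}$. By hypotheses (i)--(ii), each $P_{t,j}$ has degree at most $q$ and is $q$-generated, since restricting a piece to a sub-region does not alter its coefficients. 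Both $L$ and the generator coefficients entering Lemma~\ref{lemmanbvaluesgeneratedp} are therefore finite data depending only on $m$.

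A first pigeonhole on $F_M=\bigcup_j (F_M\cap R_j)$ yields an index $j^\star$ with $|F_M\cap R_{j^\star}|\ge M^m/L$. Applying Lemma~\ref{lemmanbvaluesgeneratedp} to the $T$ polynomials $(P_{t,j^\star})_{t\in I}$, the joint map $x\mapsto (f_{t,m}(x))_{t\in I}$ takes on $F_M\cap R_{j^\star}$ at most
\[
N(m,M)\;\le\;\Bigl(2\Lambda_m\, M^q \binom{m+q-1}{q}+1\Bigr)^{qT}
\]
distinct values, where $\Lambda_m$ is the maximum absolute value of the generator coefficients of the $P_{t,j^\star}$. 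A second pigeonhole on this joint image produces a value whose preimage in $F_M\cap R_{j^\star}$ has cardinality at least $M^m/(L\cdot N(m,M))$. Since every $\mathfrak{S}_m$-orbit in $\mathbb{Z}^m$ has at most $m!$ elements, this preimage must contain two vectors not equal up to a permutation as soon as
\[
\frac{M^m}{L\cdot N(m,M)}\;>\;m!.
\]

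Now $N(m,M)$ grows only as $M^{q^2T}$ (times a constant depending on $m,q,T,\Lambda_m$), so the required inequality rewrites as $M^{m-q^2T}>C(m,q,T,L,\Lambda_m)$ for a constant independent of $M$. It therefore suffices to fix any $m\ge q^2T+1$ and then let $M$ be sufficiently large; the two resulting vectors $(k_1,\dots,k_m)$ and $(k'_1,\dots,k'_m)$ lie in the common region $R_{j^\star}$, are not related by a permutation, and satisfy $P_{t,j^\star}(k)=P_{t,j^\star}(k')$, hence $f_{t,m}(k)=f_{t,m}(k')$, for every $t\in I$, which is the desired conclusion.

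The main obstacle is the bookkeeping: both $L$ and $\Lambda_m$ depend on $m$, and the orbit count itself carries an extra $m!$ factor. The key observation that makes the argument close is that the orbit count brings $M$ to the power $m$, whereas Lemma~\ref{lemmanbvaluesgeneratedp} contributes only $M$ to the power $q^2T$; as soon as $m>q^2T$, the exponent of $M$ on the winning side is strictly positive, so all $m$-dependent constants can be absorbed by enlarging $M$. A subtle point to verify explicitly is the preservation of hypothesis (ii) under common refinement, which holds because the refinement restricts the domains of the polynomial pieces without changing any of their coefficients.
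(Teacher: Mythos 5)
Your proof is correct and follows essentially the same strategy as the paper's: pigeonhole over the common refinement into polynomial regions, apply Lemma~\ref{lemmanbvaluesgeneratedp} to bound the number of values on the surviving region, and observe that with $m > q^2\lvert I\rvert$ the box count outgrows the value count as $M\to\infty$. The only organizational difference is that the paper counts orbit representatives (the $\binom{M+m-1}{m}$ multisets) directly and compares them to the value count, whereas you count all $M^m$ lattice points, perform a second pigeonhole on the fibers, and divide by the orbit bound $m!$ at the end; both reduce to the same $M^m$-versus-$M^{q^2\lvert I\rvert}$ degree comparison.
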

	\begin{proof}  
		%- i) Actually trivial  by induction
		
		Let $q$ be a positive integer, $I$ be a finite subset of $ \mathbb{N}$ and  $(f_{t,m}: \mathbb{R}^{m} \rightarrow \mathbb{R} )_{(t,m)\in I\times \mathbb{N}} $ be a double sequence of piecewise polynomial satisfying the assumptions of the lemma. Let $m$ be any natural number such that $m > q^2 \lvert I \rvert $. For any positive integer $M$, let $F_M$ be the box $\{ (k_1, \cdots, k_m) \in \mathbb{Z}^{m}: \forall i \quad 1 \leq k_i \leq M\}$. %, and let $\Omega_{M}:= \{ \{ \{ x_1, \cdots, x_m \} \}:   \,   (x_1, \cdots, x_m) \in F_M \}$; in other words, $\Omega_{M}$ is the set of multisets of size $m$ whose elements, when arranged as vectors, are in the box $F_M$\footnote{Another way to define $\Omega_M$ is as the orbits of the action of the symmetric group $\mathfrak{S}_m$ on $F_M$.}.  

		There are $  \binom{M+m-1}{m}$ multisets of size $m$ that can be formed from $\{1, \cdots, M\}$. One can find that many  elements of $F_M$ that are not equal up to a permutation.  Let $r$ be an upperbound on the number of pieces of $f_{t,m}$ for every $t$, i.e. $r:=\max \{ \text{number of pieces of } f_{t,m} : t \in I \}$: note that $r$  exists since $I$ is finite, but may depend on $m$. Then, there exists a subset of $F_M$ with at least $\frac{1}{r^{\lvert I \rvert}}\binom{M+m-1}{m}$ integral vectors that are not equal up to a permutation where for every $t$, $f_{t,m}$ is a polynomial $P_{t,m}$ of degree at most $q$, and $P_{t,m}$ is $q$-generated. This is true as the collection of piecewise polynomial functions $(f_{i,m})_{i\in I}$ divide $F_M$ in at most $r^{\lvert I \rvert }$ regions, where in each region, for every $i \in I$, $f_{i,m}$ is polynomial.

		%Due to Lemma \ref{lemmanbvaluesgeneratedp}, there exists an integer $q$ such that for any integer $m$, $\lvert P_{t,m} (F_M) \rvert $ is upperbounded by a polynomial of $M$ of degree $q^{2}$ and whose coefficients depend only on $m$, $t$, and $q$. Since $\lvert \Omega_M \rvert \leq \lvert F_M \rvert  $, the same holds for any subset of $\Omega_M$.
		
		Due to Lemma \ref{lemmanbvaluesgeneratedp}, the pigeonhole principle gives us that all polynomials will be equal on some vectors   $(k_1, \cdots, k_m)$ and $(k'_1, \cdots, k'_m)$ of $F_M$ not equal up to a permutation as soon as: \begin{equation}\label{eq:collisioncondition}\underbrace{\frac{1}{r^{\lvert I \rvert }}\binom{M+m-1}{m}}_{\substack{\text{number of multisets formed} \\ \text{ from a region of  $F_M$ where }  \\ \text{  each $f_{t,m} $ is polynomial } }}   \quad >  \underbrace{ \left(  2  \max_{i \in [q], t \in I}( \max_{\alpha \in S_{m,t}} \lvert \lambda^{m,t}_{\alpha, i} \rvert ) M^q \binom{m+q-1}{q} +1\right)^{q\lvert I \rvert }}_{\substack{ \text{number of values $(P_{1,m}, P_{2,m}, \cdots, P_{\lvert I \rvert, m})$} \\ \text{ can take at most on $F_M$}}} \end{equation}%\quad M^{q^2}(\binom{m}{q}q)^{q} \quad \geq   \;\underbrace{\lvert\mathsf{Im}(\Phi)\rvert}_{\text{number of values $P$ can take at most}} \end{equation}
		where the $\lambda^{m,t}_{\alpha, i}$ are  integers in the decomposition of the polynomials $f_{t,m}$ over generators $\sigma^{m}_{1}, \cdots, \sigma^{m}_{q} \in \R$ for the monomial $X^{\alpha}$.  
		
		Such a value of $M$ can be found by noticing that $\binom{M+m-1}{m}$ is a polynomial of $M$ of degree $m$ whereas the right hand side is a polynomial of $M$ of degree $q^2\lvert I \rvert$. Since we chose $ m$ to be greater than  $ q^2 \lvert I \rvert $, there exists $M \in \mathbb{N}$ such that Equation~\eqref{eq:collisioncondition} holds. Hence there exist $k$ and $k'$ whose coordinates are not equal up any permutation and such that $f_{i,m}(k_1, \cdots, k_m)=f_{i,m}(k'_1, \cdots, k'_m)$ for any $i \in I$. %{In turn, \emph{for any } $t \in I$, $f_{t,m}(k_1, \cdots, k_m)=f_{t,m}(k'_1, \cdots, k'_m)$.} 
	\end{proof}

	\begin{lemma}\label{compositiongenerationpol}
		Let $q$ be a positive integer. Let $F: \mathbb{R}^{d_2} \rightarrow \mathbb{R}$ be a piecewise multivariate polynomial whose pieces are all $q$ generated. Let $G : \mathbb{R}^{d_1} \rightarrow \mathbb{R}^{d_2}$ be a function whose coordinates are piecewise polynomial functions whose pieces are all $q$-generated. 
		
		Then the pieces of the piecewise polynomial $F \circ G : \mathbb{R}^{d_1} \rightarrow \mathbb{R}$ are all  $\tilde{q}$-generated, where $\tilde{q}$ depends only on $q$ and the degree of the pieces of $F$ and $G$.    
	\end{lemma}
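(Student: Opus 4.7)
The plan is to prove the lemma by tracking how the finitely-generated property propagates through polynomial substitution, localized to a single piece of the composition. I would first fix an arbitrary piece of $F \circ G$: on this piece, $F$ agrees with some polynomial $P \in \mathbb{R}[Y_1,\ldots,Y_{d_2}]$ that is $q$-generated by reals $\sigma_1,\ldots,\sigma_q$, and for each coordinate $j\in\{1,\ldots,d_2\}$ the component $G_j$ agrees with some polynomial $Q_j \in \mathbb{R}[X_1,\ldots,X_{d_1}]$ that is $q$-generated by reals $\tau_{j,1},\ldots,\tau_{j,q}$. On this piece $F\circ G$ coincides with $P(Q_1,\ldots,Q_{d_2})$, so the problem reduces to bounding how many generators suffice for the coefficients of this composed polynomial.

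Second, I would expand the composition explicitly. Writing $P = \sum_\alpha \gamma_\alpha Y^\alpha$ with $\gamma_\alpha = \sum_{i} \lambda_{\alpha,i}\sigma_i$ and $Q_j = \sum_\beta \delta_{j,\beta} X^\beta$ with $\delta_{j,\beta} = \sum_k \mu_{j,\beta,k}\tau_{j,k}$ (all $\lambda$'s and $\mu$'s integers), the coefficients of
\[
P(Q_1,\ldots,Q_{d_2}) \;=\; \sum_\alpha \gamma_\alpha \prod_{j=1}^{d_2} Q_j^{\alpha_j}
\]
are, after fully distributing, integer linear combinations of products of the form $\sigma_i \cdot \tau_{j_1,k_1}\tau_{j_2,k_2}\cdots \tau_{j_s,k_s}$ with $s \leq |\alpha| \leq \deg(P) \leq \deg(F)$ and indices in the appropriate ranges. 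Thus, letting $\mathcal{M}$ be the set of all such products, every coefficient of $F\circ G$ restricted to the chosen piece lies in the $\mathbb{Z}$-span of $\mathcal{M}$, making the piece $|\mathcal{M}|$-generated.

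Finally, I would bound $|\mathcal{M}|$ uniformly. Products of length at most $\deg(F)$ among the $d_2 q$ reals $\tau_{j,k}$ form a set of size at most $\binom{d_2 q + \deg(F)}{\deg(F)}$, and multiplying each such product by one of $\sigma_1,\ldots,\sigma_q$ (or by $1$) gives $\tilde q \leq (q+1)\binom{d_2 q + \deg(F)}{\deg(F)}$, a quantity depending only on $q$, $d_2$, and $\deg(F)$, and in particular independent of the specific piece selected. Hence every piece of $F\circ G$ is $\tilde q$-generated. The main obstacle is purely bookkeeping: a priori, multiplication of $q$-generated reals can introduce many new ``ring'' elements, but the crucial observation is that because $P$ has bounded degree $\deg(F)$, only products of bounded length in the original $\sigma$'s and $\tau$'s ever appear in the expansion, which is what keeps $\tilde q$ controlled and allows this lemma to be iterated across GNN layers in Lemma~\ref{lemma:polyfunction}.
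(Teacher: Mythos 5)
Your argument follows essentially the same route as the paper's: identify a polynomial piece of $F\circ G$ with a substitution $P(Q_1,\ldots,Q_{d_2})$ of single polynomial pieces, fully expand, and observe that the resulting coefficients lie in the $\mathbb{Z}$-span of bounded-degree monomials in the original generators, so that the new generator set can be taken to be those monomials. The paper spends a few extra lines making rigorous the step you take for granted at the start — namely that after refining by the polynomial regions $B_i$ of the $G_j$'s and $G^{-1}(A_j)$ of $F$, every piece of $F\circ G$ really does coincide with $F_i\circ G_j$ for single pieces $F_i$, $G_j$ — but this is exactly what you implicitly invoke when you ``fix a piece'' on which $F$ and each $G_j$ is a single polynomial, so the substance is the same.

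There is one genuine divergence worth flagging. You allow each coordinate $G_j$ its own generator set $\tau_{j,1},\ldots,\tau_{j,q}$, which is the literal reading of ``whose pieces are all $q$-generated,'' and your bound $\tilde q \le (q+1)\binom{d_2 q + \deg F}{\deg F}$ consequently depends on $d_2$. The paper's displayed computation silently reuses a single common generator tuple $\sigma_1,\ldots,\sigma_q$ for both the piece of $F$ and the pieces of $G$, which is what produces a $\tilde q$ depending only on $q$ and $\deg F$, as the statement asserts. Your more careful bookkeeping therefore exposes a small imprecision in the lemma as written: if the generator sets genuinely differ across coordinates, the $d_2$-dependence is unavoidable. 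In the application to Lemma~\ref{lemma:polyfunction} this is harmless — there $d_2$ is the fixed input dimension of the combine function, and in fact all the relevant pieces arise from a single fixed neural network whose generators can be taken to be shared — but your version is the one that matches the stated hypotheses.
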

	\begin{proof}
		%We give a proof for $d_2 = 1$, the general case can be obtained similarly as the reasoning component wise. 

		We first make the following observation.  Let $A_1, \cdots, A_{r_1} \subseteq \mathbb{R}^{d_2}$ %{\color{red}clash with notation for indeterminates in multivariate polynomials -- see Definition~\ref{def:piecewise-poly}}
		be polynomial regions of $F$ and $B_1, \cdots, B_{r_2} \subseteq \mathbb{R}^{d_1}$ be polynomial regions of $G$ that union to $\mathbb{R}^{d_2}$ and $\mathbb{R}^{d_1}$ respectively (such always exist, cf. Definition \ref{def:piecewise-poly}). %{\color{red} Notion of ``polynomial regions" for a piecewise polynomial map is not defined earlier}. 
		For each $i \in [r_1]$, and $j \in [r_2]$, let $Z_{ij} := (B_i \cap G^{-1}(A_j) )$. Then:
		\begin{itemize}
			\item $G$ and $F \circ G$ are polynomial on each  $Z_{ij}$, as $Z_{ij} \subseteq B_i$, and $G(Z_{ij}) \subseteq {A}_{j}$.
			\item The $Z_{ij}$ union to $\mathbb{R}^{d_1}$, as $\bigcup_{i \in [r_2], j\in [r_1]}Z_{ij} = \bigcup_{j \in [r_1]} \bigcup_{i \in [r_2]}  B_i \cap G^{-1}(A_j) = \bigcup_{j \in [r_1]} \left( G^{-1}(A_j) \bigcap \underbrace{\left( \bigcup_{i \in [r_2]}  B_i \right)}_{\mathbb{R}^{d_1}} \right) = \bigcup_{j \in [r_1]} G^{-1}(A_j) = \mathbb{R}^{d_1}$. The last equality follows from  $\bigcup_{i\in[r_1]} A_i = \mathbb{R}^{d_2} \implies \bigcup_{i\in[r_1]} G^{-1}(A_i) = \mathbb{R}^{d_1}$.
		\end{itemize} Hence, a polynomial piece of $F \circ G$ is of the form $F_i \circ G_j $ where $F_1,\cdots, F_{r_1} $ are the polynomial pieces of $F$, and $G_1, \cdots, G_{r_2}$ are the polynomials pieces of $G$.
		%$i \in [r_1]$ and $j \in [r_2]$, 
		
		%polynomial regions of $F \circ G$.
		
		If  $ F_i = \sum_{\alpha \in S_{F_i}} \gamma_\alpha X^{\alpha}$ and $G_j = \sum_{\beta \in S_{G_j}} \nu_{\beta} X^{\beta}$ with $\gamma_\alpha$ and $\beta_\nu$ integers then
		
		\begin{align*}
		(F_i \circ G_j )(X) &= \sum_{\alpha \in S_{F_i}} \gamma_\alpha \left(  \sum_{\beta \in S_{G_j}} \nu_{\beta} X^{\beta}  \right)^{\alpha}  \\
		& = \sum_{\alpha \in S_{F_i}} \left( \sum_{k=1}^{q} \lambda_{k, \alpha} \sigma_{k} \right) \left( \sum_{\beta \in S_{G_j} } \left( \sum_{k=1}^{q} \eta_{k, \alpha} \sigma_k \right) X^{\beta} \right)^{\alpha} 
		\end{align*}
		After expansion, the polynomial $(F_i \, \circ  \, G_j )$  has coefficients that are linear combinations of multivariate monomials of $\sigma_1, \cdots, \sigma_q$ (of total degree at most the degree of $F_j +1$ in this case) with integer coefficients, as the $\lambda$ and $\eta$ coefficients are supposed to be integers. This shows that $F_i \circ G_j$ is $\tilde{q}$-generated, where $\tilde{q}$ depends only on $p$, the degree of $F_i$ and $G_j$.
	\end{proof}

	\begin{lemma}\label{lemma:polyfunction}
		%Let $G=(V,E)$ be a (unicolored $\ell =1$) graph. 
		For positive integers $m$, $k_1, \cdots, $ and $k_m$,  let $T[k_1, \cdots, k_m]$ designate the rooted tree illustrated in Figure \ref{figpolytreecounter}. For any vertex $v \in V(T[k_1, \cdots, k_m])$, 
		Let $\xi^{t}(T[k_1, \cdots, k_m],v)$ be the embedding of the vertex $v \in V$ obtained via a GNN with piecewise activation functions after $t$ iterations (where $\xi^{0}(u)=1$ for any vertex $u$ of $T[k_1, \cdots, k_m]$). 
		Then, for any iteration $t$, there exists an integer $q$ such that for any integer $m$, and any vertex $v \in V$, there exists a symmetric multivariate piecewise polynomial function $F_m$ such that  
		\begin{itemize}
			\item $\xi^{t}(T[k_1,\ldots,k_m],v)=F_m(k_1, \cdots, k_m)$. 
			\item %the degree of $F$ is independent of $m$, and depends only on $t$ and the underlying neural network.
			$\mathsf{deg}(F_m) \leq q$ %{\color{red}%Don't we need a quantification over all $m$?}
			\item each piece of $F_m$ is $q$-generated % is finitely generated by a number of reals that depends only on $t$ and the underlying neural network (and is independent of $m$).% and depends only on $t$ and the underlying neural network.
		\end{itemize}

	\end{lemma}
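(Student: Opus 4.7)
The plan is to prove the lemma by induction on $t$, tracking the form of $\xi^t(v)$ for each of the three vertex types in $T[k_1,\ldots,k_m]$: a leaf $\ell$, a depth-one vertex $x_i$, and the root $s$. The base case $t=0$ is immediate since $\xi^0\equiv 1$ is a constant function of $(k_1,\ldots,k_m)$, hence symmetric, of degree zero, and $1$-generated.

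For the inductive step, the key preliminary observation is that the combination function $\mathsf{comb}$, being a feedforward neural network of fixed size with piecewise polynomial activation, is itself a piecewise polynomial on $\mathbb{R}^{2\ell}$ whose pieces have bounded degree and are $Q$-generated, where $Q$ is an integer that depends only on the NN architecture (depth, width, weights, biases) and the activation function. This follows by repeatedly applying Lemma \ref{compositiongenerationpol} once per layer, since each layer is an affine map followed by a coordinate-wise application of the piecewise polynomial activation; the generators accumulated are finite in number because the NN and the activation contribute only finitely many real constants each.

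Assuming the inductive invariant at iteration $t$, I would then compute $\xi^{t+1}$ via the GNN update rule at each vertex type:
\begin{align*}
\xi^{t+1}(\ell)&=\mathsf{comb}\bigl(\xi^t(\ell),\,\xi^t(x_i)\bigr),\\
\xi^{t+1}(x_i)&=\mathsf{comb}\bigl(\xi^t(x_i),\,\xi^t(s)+(k_i-1)\xi^t(\ell_i)\bigr),\\
\xi^{t+1}(s)&=\mathsf{comb}\Bigl(\xi^t(s),\,\textstyle\sum_{i=1}^{m}\xi^t(x_i)\Bigr),
\end{align*}
where $\ell_i$ is any leaf child of $x_i$. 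By the inductive hypothesis, each input to $\mathsf{comb}$ is a piecewise polynomial in $(k_1,\ldots,k_m)$ of bounded degree with $q_t$-generated pieces. Addition, summation over neighbors, and multiplication by $(k_i-1)$ preserve these properties (with the degree increasing by at most one in the last case and the generators unchanged). Applying Lemma \ref{compositiongenerationpol} to the resulting composition with $\mathsf{comb}$ yields a piecewise polynomial whose pieces are $q_{t+1}$-generated and of bounded degree, where $q_{t+1}$ depends only on $q_t$, the degree of the pieces of $\mathsf{comb}$, and $Q$. Since none of these quantities depends on $m$, neither does $q_{t+1}$; unrolling the induction produces an integer $q$ depending only on $t$ and the fixed NN architecture that witnesses the claim.

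The symmetry assertion at the root $s$ is obtained by strengthening the inductive invariant as follows: $\xi^t(x_i)$ has the same functional form as $\xi^t(x_j)$ with the roles of $k_i$ and $k_j$ interchanged, and $\xi^t(s)$ is symmetric in $(k_1,\ldots,k_m)$. Both properties are preserved by the recurrence, since the update for $x_i$ treats $k_i$ uniformly across $i$, and the update for $s$ aggregates over all $x_i$ symmetrically via the sum. Consequently $\sum_{i=1}^{m}\xi^t(x_i)$ is symmetric in $(k_1,\ldots,k_m)$, and composition with $\mathsf{comb}$ preserves this symmetry for $\xi^{t+1}(s)$. I expect the main obstacle to be the bookkeeping: ensuring that the degree and generator bounds propagate \emph{uniformly in $m$} through the recursion, and that the regions on which the various composed piecewise polynomials agree are compatible. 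The uniformity is handled by Lemma \ref{compositiongenerationpol} combined with the fact that $\mathsf{comb}$ has only finitely many pieces independent of $m$; the region compatibility is automatic because each piece is cut out by polynomial inequalities whose preimages under the (piecewise polynomial) previous embedding are themselves polynomial regions.
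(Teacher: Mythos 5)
Your proof is correct and follows essentially the same inductive strategy as the paper: induct on $t$, invoke Lemma~\ref{compositiongenerationpol} at each step to propagate the degree and generation bounds uniformly in $m$, and observe that the fixed architecture of $\mathsf{comb}$ controls the growth of $q_t$. Your version is somewhat more explicit than the paper's in two respects --- you spell out the update formulas separately for leaves, depth-one vertices, and the root (making visible the $(k_i-1)$ factor that increments the degree), and you supply an actual argument for the symmetry claim via the strengthened inductive invariant, which the paper asserts in the lemma statement but does not argue in its proof.
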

	
	\begin{proof}
		We first prove all properties of $\xi^{t}(T[k_1,\ldots,k_m],v)$ by induction on $t$. %that, for any vertex $v \in V(T[k_1, \cdots, k_m])$, $\xi^{t}(T[k_1, \ldots, k_m],v)$ is a piecewise polynomial function of the $k_i$'s.
		\medskip
		
		\emph{Base case:} for $t=0$ this is trivial since all vertices are initialised with the constant polynomial 1, whose degree does not depend on $m$, and is finitely generated.% {\color{red}Needs adjustment after modifying the statement of the lemma with dimensions of the embedding}.
		\medskip
		
		\emph{Induction step:} Suppose the property is true at iteration $t$, i.e there exists an integer $q_t$ such that for each vertex $w$, and for every integer $m$, $\xi^{t}(T[k_1, \ldots, k_m],w)$ is a multivariate polynomial of the $k_i$'s whose degree is upperbounded by  $q_t$, and is $q_t$ generated. For every vertex $v $ of $T[k_1, \cdots, k_m]$
		$$\xi^{t+1}(T[k_1, \ldots, k_m],v) = \phi(\xi^{t}(T[k_1, \ldots, k_m],v), \sum_{w\in N(v)} \xi^{t}(T[k_1, \ldots, k_m],w))$$ where $\phi$ is a piecewise bivariate polynomial (as a Neural Network $\mathbb{R}^{2}\rightarrow \mathbb{R}$ with piecewise polynomial activation). %{\color{red}$\phi$ is a function of two arguments, not $m$ arguments. The arguments have dimension equal to the output of the neural network used.}
		By composition $\xi^{t+1}(T[k_1, \ldots, k_m],v)$ is a piecewise multivariate polynomial of $k_1, \cdots, k_m$ such that for every integer $m$,  %updated degree of  $\xi^{t+1}(T[k_1, \ldots, k_m],v)$ depends only on $q_t$ and the degree of $\phi$: it is still independent  of $m$. %{\color{red} Connect with quantification over $m$.}
		$\mathsf{deg}(\xi^{t+1}(T[k_1, \linebreak \ldots, k_m],v)) \leq q_t \cdot \mathsf{deg}(\phi)$. 
		Furthermore, since for every $m$ and every vertex $u$, $\xi^{t}(T[k_1, \ldots, k_m],u)$ is supposed to be $q_{t}$ generated, then using the above update rule, Lemma \ref{compositiongenerationpol} gives us that by composition, $\xi^{t+1}(T[k_1, \ldots, k_m],v)$ is $\tilde{q_t}$ generated, where $\tilde{q_t}$ depends only on the degre of $\phi$ and $q_t$.
		
		Setting $q_{t+1} := \max (q_t \cdot \mathsf{deg}(\phi), \, \tilde{q_{t}} )$ gives us the desired properties (for every vertex $v$ and every integer $m$, $\xi^{t+1}(T[k_1,  \cdots, k_m], v)$ has degree at most $q_{t+1}$ and is $q_{t+1}$ generated), and ends the induction on $t$.
	\end{proof}

	\begin{proof}[Proof of Theorem \ref{theorem:uniform}]
		We already know \cite{grohe2021logic}  that color refinement refines any recurrent GNN (even with an architecture of unbounded size). We prove the existence of pairs of graphs that can be separated by the color refinement algorithm, but cannot be separated by a recurrent GNN of fixed (but arbitrary) size. %{\color{red}We will first explain the $\ell=1$ case, and then show how to extend to $\ell \geq 2$}. 
		We use $T[k_1, \cdots, k_m]$ to refer to the tree illustrated in Figure \ref{figpolytreecounter}. This tree has depth two, a root node $s$, and contains $m$ nodes at depth one. Each vertex $i$ at depth $1$ has exactly $k_i-1$ ``children'' at depth two (and therefore $k_i$ neighbors, where $k_i$ is a positive integer). In the following, all vertices have color label $1$.

		\emph{Claim: Let $T[k_1, \cdots, k_m]$ and $T'[k'_1, \cdots, k'_m]$ be two rooted trees given by Figure \ref{figpolytreecounter}. If the $k_i$'s and $k'_i$'s are not equal up to a permutation, the color refinement distinguishes $s$ and $s'$ after two iterations, i.e. $\mathsf{cr}^{2}(s) \neq \mathsf{cr}^{2}(s')$. }
		\begin{proof}[Proof of claim]
			Simply note that 
			$$\mathsf{cr}^{2}(s) = ( \mathsf{cr}^{1}(s), \{\{ \mathsf{cr}^{1}(x_1), \cdots,  \mathsf{cr}^{1}(x_m) \}\} ) $$
			%\cup_{i\in \{1, \cdots, m \}}
			$$\text{where} \quad \mathsf{cr}^{1}(s) = (\underbrace{1}_{\mathsf{cr}^{0}(s)}, \{\{ \underbrace{ 1, \cdots, 1}_{m \, \text{times}} \}\} )$$
			$$\text{and} \quad \forall i \in \{1, \cdots, m \} \quad \mathsf{cr}^{1}(x_i) = (\underbrace{1}_{\mathsf{cr}^{0}(x_i)} , \{ \{ \underbrace{1, \cdots, 1 }_{k_i \, \text{times} }\}\})$$
			hence $\mathsf{cr}^{2}(s)$ is uniquely determined by the multiset $\{\{k_1, \cdots, k_m\}\}$.
		\end{proof}

		Let $T > 0$ be a positive integer, and for $0 \leq t \leq T$, let $f_{t,m}(k_1, \cdots, k_m):=\xi^{t}(T[k_1, \ldots, k_m],s)$ be the value returned by a GNN with piecewise polynomial activation after $t$ iterations (note that the embeddings are one-dimensional because only one color is used). Using Lemma \ref{lemma:polyfunction}, there exists an integer $q$ such that the double sequence %there exists an integer $q$ {\color{red}Need to switch the order of quantifiers: there exists a (universal) $q$ such that for all $t \in \{0, \cdots, T\}$}
		$( f_{t,m})_{t \in \{0, \cdots, T\}, m \in \mathbb{N}}$ of  piecewise multivariate polynomials has  degree at most $q$ %(the degree of $f_{t,m}$ is upper-bounded by a constant that does not depend on $t$ nor $m$) %-- {\color{red} Lemma \ref{lemma:polyfunction} gives $q$ as the explicit bound on the degrees as well, right?}),
		and such that every $f_{t,m}$ is $q$-generated. Lemma \ref{lemma:polycollision} with $I=\{0, \cdots, T\}$ tells us that there exists $m\in\mathbb{N}$, and two vectors $k\in \mathbb{N}^{m}$ and $k'\in\mathbb{N}^{m}$ whose coordinates are not equal up to permutations, such that for any $t\in \{0, \cdots, T\}$, $f_{t,m}(k_1, \cdots, k_m) =f_{t,m}(k'_1, \cdots, k'_m)$. %{\color{red}Lemma \ref{lemma:polycollision} only says something about $f_m$, not all $f_t$ for $t\leq m$. There is something needed here: The $f_t$, $t\geq 1$ functions here all have the same domain $\mathbb{R}^m$, but in Lemma \ref{lemma:polycollision} the domain of $f_p$ is $\mathbb{R}^p$.}
		%is a region $X \subset \{x\in \mathbb{R}^{m}: x_i \geq 1\;\; \forall i=1, \ldots, m\}$ of positive upper density such that $f_0, \cdots, f_T$ are all polynomial on $X$. Let $P_0, \cdots, P_T$ be the corresponding polynomials. Since the degree of $f_t$ does not depend on $m$, neither do the degrees of $P_0, \cdots P_T$. Next, we apply Lemma \ref{lemma:polynomial} to $P_0, \cdots, P_T$ with $q=\max(\mathsf{deg}(P_0), \cdots, \mathsf{deg}(P_T))$ to conclude that there exists $m \geq q$ and integers $(k_1, \cdots, k_m) \in X$ and $(k'_1, \cdots, k'_m) \in X$ such that $\xi^{t}(s)$ and $\xi^{t}(s')$ are equal for any $t\in \{0, \cdots, T\}$. }
	\end{proof}
	
	\begin{remark}Note that in Theorem \ref{theorem:uniform}, depth two is minimal: for  any pair of  non isomorphic rooted trees of depth one, any GNN with one neuron perceptron, an injective activation function, weights set to one, and zero bias can distinguish their root vertex in one iteration. Indeed, in that case, $\xi^{1}(s) = \sigma(1+\mathsf{deg}(s)) $ if the GNN is recurrent with a combine function given by $\phi: \mathbb{R}^{2} \rightarrow \mathbb{R},(x_1, x_2) \mapsto \sigma(x_1 + x_2)$. Hence, $\xi^{1}(s)\neq\xi^{1}(s')$ as soon as $\sigma $ is injective and $s$ and $s'$ have distinct degree.
	\end{remark}
	
	\section{Activations that are not piecewise polynomial }\label{sec:5}
	
	In this Section we present a proof of Theorem \ref{result:other:activations}. We prove that for any pair of non isomorphic rooted trees of depth two, i.e. trees of the form $T[k_1, \cdots, k_m]$ and $T'[k'_1, \cdots, k'_n]$ (here the $k_i$'s and $k'_i$'s are all greater than or equal to $1$, cf. Figure \ref{figpolytreecounter}) can be distinguished by a bounded GNN with any of the following activation functions: exponential, sigmoid, or a hyperbolic sine, cosine or tangent function. Consider the following $1$-neuron perceptron $\phi$ with activation function $\sigma$, $\phi: \mathbb{R}^{2} \rightarrow \mathbb{R},\; \phi(x_1,x_2) = \sigma( x_1  +  x_2)$. Then it is easy to see that:
	\begin{align*}
	\forall v \in V(T[k_1, \cdots, k_m]) \quad \xi^{1}(v) &= \sigma( \xi^{0}(v) + \sum_{w\in N(v)} \xi^{0}(w))  = \sigma(1 + \mathsf{deg}(v)) \label{eq:update1}\\
	\xi^{2}(v) &= \sigma(\sigma(1 + \mathsf{deg}(v)) +  \sum_{w\in N(v)} \sigma(1 + \mathsf{deg}(w)) 
	\end{align*} 
	In particular $\xi^{2}(s) = \sigma( \sigma( 1 + m) + \sum_{i=1}^{m}\sigma(k_i+1))
	$. %{\color{red}Should this be $\xi^{2}(s) = \sigma( \sigma( 1 + m) + \sum_{i=1}^{m}\sigma(k_i + 1))
	%$? We may need to update our expressions. SAMMY: cf below}.
	Now suppose $\sigma$ is either injective on $\mathbb{R}$, or nonnegative and injective on $\mathbb{R}^{+}$ (this is the case for the exponential, the sigmoid, the hyperbolic tan, and the hyperbolic cosine and sine), $s$ and $s'$ are vertices of two trees with potentially different number of leaves $m$ and $n$, then
	\begin{equation}\label{eq:it2sigma}
	\xi^{2}(s) = \xi^{2}(s') \iff  \sum_{i=0}^{m}\sigma(k_i+1)  =\sum_{i=0}^{n}\sigma(k'_i+1)\end{equation}
	%where we have for ease of presentation and without loss of generality replaced $k_i+1$ and $k'_i+1$ by $k_i \geq 1$ and $k'_i \geq 1$ respectively, and 
	where $k_0:=1+m$ and $k'_0:=1+n$. The goal of the remainder of this section is to prove that the right hand side equality of~\eqref{eq:it2sigma} implies $m=n$ and $k_i$'s are the same as $k'_i$'s, up to a permutation, for the activation functions $\sigma$ of Theorem \ref{result:other:activations}. %For ease of presentation, and without loss of generality we are given $m$ integers $k_1, \cdots, k_m$ and $n$ integers $k'_1, \cdots, k'_n$.
	
	%\begin{theorem}\label{th:oneneuronperceptron}
	%    For $\sigma \in \{\mathsf{sig}, \mathsf{tanh}, \mathsf{cosh}, \mathsf{sinh}, \mathsf{cos}, \mathsf{sin}\}$, any pair of vertices from any pair of trees of depth two can be distinguished by one neuron perceptron GNN with activation $\sigma$. 
	%\end{theorem}

	\begin{theorem}[Lindemann-Weierstrass Theorem, 1885]\label{lindermannweierstrassth}
		If $\alpha_1, \cdots, \alpha_n$ are distinct algebraic numbers, then the exponentials $e^{\alpha_1}, \cdots,  e^{\alpha_n}$ are linearly independent over the algebraic numbers.
	\end{theorem}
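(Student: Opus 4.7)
The plan is to prove this classical statement by contradiction using Hermite's integral method. Suppose there exist algebraic numbers $\beta_1,\ldots,\beta_n$, not all zero, with $\sum_{i=1}^n \beta_i e^{\alpha_i} = 0$. The first step is to reduce to rational integer coefficients: multiplying the relation by all its images under the Galois group of the field generated by the $\alpha_i$ and $\beta_i$, expanding the resulting product, and invoking the elementary symmetric function theorem, one derives a nontrivial relation $\sum_{j=1}^m c_j e^{\gamma_j} = 0$ with $c_j \in \mathbb{Z}\setminus\{0\}$ and distinct algebraic $\gamma_j$, where the multiset $\{\gamma_1,\ldots,\gamma_m\}$ decomposes into complete Galois orbits with matching coefficients.

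For each large prime $p$, I would introduce the auxiliary polynomial
$$f(x) = \frac{\ell^{mp}\, x^{p-1} \prod_{j=1}^m (x-\gamma_j)^p}{(p-1)!},$$
where $\ell \in \mathbb{Z}_{>0}$ is chosen so that each $\ell\gamma_j$ is an algebraic integer. Setting $F(x) = \sum_{k \ge 0} f^{(k)}(x)$, the elementary identity $(e^{-x}F(x))' = -e^{-x}f(x)$, integrated along $[0,\gamma_j]$, multiplied by $c_j e^{\gamma_j}$ and summed in $j$, yields after using $\sum_j c_j e^{\gamma_j}=0$ the key equation
$$\sum_{j=1}^m c_j F(\gamma_j) \;=\; -\sum_{j=1}^m c_j e^{\gamma_j} \int_0^{\gamma_j} e^{-x} f(x)\,dx.$$

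The crux is a two-sided estimate. On the analytic side, a trivial bound on each integral gives $\bigl|\sum_j c_j F(\gamma_j)\bigr| \le A^p/(p-1)!$ for a constant $A$ depending only on the $c_j$, $\gamma_j$, $\ell$, which tends to $0$ as $p \to \infty$. On the arithmetic side, term-by-term bookkeeping of derivatives of $f$ shows that $f^{(k)}(\gamma_j)$ is an algebraic integer divisible by $p$ for every $k$ and every $j$, that $f^{(k)}(0)$ is divisible by $p$ for $k \ne p-1$, and that $f^{(p-1)}(0)$ is an algebraic integer not divisible by $p$ once $p$ exceeds $\max(|c_1|,\ell,|\ell^m \gamma_1 \cdots \gamma_m|)$. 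Summing over complete Galois orbits converts $\sum_j c_j F(\gamma_j)$ into a rational integer, and the divisibility analysis shows it is nonzero modulo $p$, hence of absolute value at least $1$. For $p$ sufficiently large, the analytic upper bound is strictly less than $1$ while the arithmetic lower bound is at least $1$, producing the contradiction.

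The hard part will be the arithmetic step: orchestrating the divisibility properties of $F$ at $0$ versus at the $\gamma_j$, and upgrading $\sum_j c_j F(\gamma_j)$ to a rational integer via Galois-theoretic symmetrization. The pivotal point is that the factor $x^{p-1}$ in $f$ isolates $f^{(p-1)}(0)$ as the unique ``exceptional'' term whose value is not automatically divisible by $p$, and that its value, essentially $\ell^{mp}\prod_j(-\gamma_j)^p$ up to an integer unit, is coprime to $p$ for all but finitely many primes. Once this delicate bookkeeping is secured, the clash between the archimedean and non-archimedean estimates closes the argument.
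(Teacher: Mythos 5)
The paper does not prove this statement --- it is the classical Lindemann--Weierstrass theorem of 1885, cited as a known result and used as a black box in Lemma~\ref{lemmasumexp} and in the proof of Theorem~\ref{result:other:activations}. There is consequently no in-paper argument to compare against. What you have sketched is the standard Hermite-style transcendence proof, and its overall architecture --- Galois symmetrization down to a relation with rational integer coefficients whose exponents form complete conjugate sets, the $(p-1)!$-normalized auxiliary polynomial $f$, the identity $(e^{-x}F(x))' = -e^{-x}f(x)$, and the clash between a vanishing archimedean bound and a non-vanishing $p$-adic lower bound --- is the right one and, carried out carefully, does yield a complete proof.

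There is, however, a genuine gap in your bookkeeping as written. Your key identity $\sum_j c_j F(\gamma_j) = -\sum_j c_j e^{\gamma_j}\int_0^{\gamma_j}e^{-x}f(x)\,dx$ involves only the values $F(\gamma_j)$: the term $F(0)\sum_j c_j e^{\gamma_j}$ that would otherwise appear has dropped out precisely because $\sum_j c_j e^{\gamma_j}=0$. Your divisibility analysis then pivots on $f^{(p-1)}(0)$ being the unique derivative not divisible by $p$ --- but if $0$ is not among the $\gamma_j$ (and your explicit factorization $\ell^{mp}\,x^{p-1}\prod_{j=1}^m(x-\gamma_j)^p$ tacitly assumes it is not, since otherwise the $x^{p-1}$ factor would be redundant), then $F(0)$ never appears in the sum at all. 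In that case every $\gamma_j$ is a zero of $f$ of exact order $p$, so every $F(\gamma_j)$ is divisible by $p$ (after the usual Galois averaging into a rational integer), hence $\sum_j c_j F(\gamma_j)\equiv 0\pmod p$, and the argument does not rule out that the sum is simply $0$: no contradiction with the bound strictly less than $1$ is produced. The usual fix is to normalize so that one exponent equals $0$ before symmetrizing (divide the original relation by $e^{\alpha_1}$; the exponent $0$ survives the Galois products with a nonzero rational coefficient), or alternatively to use Baker's variant in which the distinguished factor is $(x-\gamma_i)^{p-1}$ for each $i$ in turn and the resulting quantities $J_i$ are multiplied together. Without one of these adjustments the pigeonhole step does not close.
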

	
	\begin{lemma}\label{lemmasumexp}
		Let $n$ and $m$ be positive integers, and $\alpha_1, \cdots, \alpha_n$ and $\alpha'_1, \cdots, \alpha'_m$ be algebraic numbers. Then $\sum_{i=1}^{n} e^{\alpha_i} = \sum_{i=1}^{m} e^{\alpha'_i}$ if and only if $m=n$ and the $\alpha_i$'s and $\alpha'_i$'s are equal up to a permutation.
		%$\forall i \in \{1, \cdots, n \}, \alpha_i = \alpha'_i$
	\end{lemma}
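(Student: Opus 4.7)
The reverse implication is immediate, so the plan focuses on the forward direction. The statement is essentially a multiset version of Lindemann--Weierstrass, but Theorem \ref{lindermannweierstrassth} requires the exponents to be \emph{distinct}, whereas in the lemma the $\alpha_i$'s (and $\alpha'_i$'s) may have repetitions, and the two multisets may overlap. So the first thing I would do is consolidate by listing the distinct values occurring in either list: let $\beta_1,\ldots,\beta_k$ be an enumeration (without repetition) of the set $\{\alpha_1,\ldots,\alpha_n\}\cup\{\alpha'_1,\ldots,\alpha'_m\}$, and for each $j\in\{1,\ldots,k\}$ let $a_j\in\mathbb{Z}_{\geq 0}$ be the multiplicity of $\beta_j$ in the multiset $\{\!\{\alpha_1,\ldots,\alpha_n\}\!\}$, and similarly $b_j\in\mathbb{Z}_{\geq 0}$ its multiplicity in $\{\!\{\alpha'_1,\ldots,\alpha'_m\}\!\}$.

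With this notation, the hypothesis $\sum_{i=1}^{n} e^{\alpha_i}=\sum_{i=1}^{m}e^{\alpha'_i}$ rewrites as the single linear relation
\[
\sum_{j=1}^{k} (a_j-b_j)\, e^{\beta_j} \;=\; 0,
\]
whose coefficients $a_j-b_j$ are integers, and whose exponents $\beta_1,\ldots,\beta_k$ are, by construction, \emph{distinct} algebraic numbers. Since $\mathbb{Z}$ is contained in the field of algebraic numbers, Theorem \ref{lindermannweierstrassth} applies directly and forces $a_j-b_j=0$ for every $j$.

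Equality $a_j=b_j$ for all $j$ is exactly the statement that the multisets $\{\!\{\alpha_1,\ldots,\alpha_n\}\!\}$ and $\{\!\{\alpha'_1,\ldots,\alpha'_m\}\!\}$ coincide; matching cardinalities then gives $n=\sum_j a_j=\sum_j b_j=m$, and the two lists agree up to a permutation, completing the forward direction.

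The only subtle point, and where I would be most careful, is the reduction to distinct exponents before invoking Lindemann--Weierstrass: the theorem as stated is a linear independence statement, so one must ensure that after consolidating repeated terms the resulting exponents are pairwise distinct and that the coefficients live in the algebraic numbers (here trivially in $\mathbb{Z}$). Once this bookkeeping is in place the argument is a one-line application of Theorem \ref{lindermannweierstrassth}; no further estimates or calculations are needed.
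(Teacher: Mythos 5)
Your proposal is correct and takes essentially the same approach as the paper: consolidate repeated exponents into distinct ones with integer (hence algebraic) coefficients and then invoke Lindemann--Weierstrass to conclude all net multiplicities vanish. Your writeup is in fact more careful than the paper's terse sketch, which simply says to group terms by occurrence and observe the resulting contradiction.
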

	\begin{proof}
		$(\Longleftarrow )$ is clear. For $(\Longrightarrow )$, by contradiction suppose that the $\alpha_i$'s and $\alpha'_i$'s are not equal up to a permutation. First, if the $\alpha_i$'s (resp. $\alpha'_i$'s) are not distinct one can group them by their number of occurrences in both sums. Then, we would have a linear dependence with integer coefficients of exponentials of integers. This contradicts Theorem \ref{lindermannweierstrassth} (Linderman-Weirstrass).
	\end{proof}

	\begin{proof}[Proof of Theorem \ref{result:other:activations}]
		Without loss of generality, suppose the $k_i$'s and $k'_i$'s are ordered in increasing order. For ease of notation, let $\alpha  $ and $\alpha'$ be the vectors defined as $\alpha_i = k_i+1$ for all $i\in \{1, \cdots, m\}$ and $\alpha'_i=k'_i+1$ for all $i\in \{1, \cdots, n\}$. We will now prove that~\eqref{eq:it2sigma} implies $\alpha=\alpha'$ in each case.

		- $\sigma \in \{\mathsf{sigmoid}, \mathsf{tanh}\}$. %We first prove the statement for the sigmoid and then show it also extends to the hyperbolic tangent. 
		In the case of the sigmoid,~\eqref{eq:it2sigma} yields the following equation after multiplication by the the product of the denominators:
		\begin{equation*}\resizebox{\hsize}{!}{$
			\left( \sum_{i=1}^{m}  e^{\alpha_i} \left( \prod_{\substack{j=1 \\ j\neq i }}^{m} (1 +e^{\alpha_j}) \right) \right ) \prod_{i=1}^{n} (1+ e^{\alpha'_i})= \left( \sum_{i=1}^{n}  e^{\alpha'_i} \left( \prod_{\substack{j=1 \\ j\neq i }}^{n} (1+e^{\alpha'_j}) \right) \right)  \prod_{i=1}^{m} (1+e^{\alpha_i})
			$}
		\end{equation*}
		After developing and grouping each hand side into linear combinations of exponentials we obtain an equation of the form: 
		\begin{equation}\label{eq:notcompact}
		\sum_{\substack{S \subseteq \{1, \cdots, m\}  \\ T \subseteq \{ 1, \cdots, n\} }} \gamma_{S,T}  \exp(\alpha_S + \alpha'_T) =\sum_{\substack{S \subseteq \{1, \cdots, m\}  \\ T \subseteq \{ 1, \cdots, n\} }} \gamma_{S,T}  \exp(\alpha'_S + \alpha_T)
		\end{equation}
		where for $S \subseteq \{1, \cdots, m\}$, $\alpha_S:=\sum_{i \in X }\alpha_i$ (resp. for $T\subseteq \{1, \cdots, n\}, \alpha'_T:=\sum_{i \in X }\alpha'_i$). {All $\gamma_{S,T}$ are integers, hence algebraic} and $\gamma_{\emptyset, T}=0$ for all subsets $T \subseteq \{1, \ldots, n\}$. 
		
		%{\color{blue}
		%\emph{Claim. } If Equation \ref{eq:notcompact}
		% holds, then $m=n$.
		
		%\begin{proof}[Proof of claim]
		%    Note that there are exactly  $\lvert \{ (S,T): S  \subset \{1, \cdots, m \}, \; T \subset \{1, \}, \; \gamma_{S,T} \neq 0 \}\rvert $  terms on the left hand side and right hand side. This is exactly the number of terms obtained after developing the products and grouping the arguments of the exponentials, and is an increasing function of $$ Since the coordinates of $\alpha$ and $\alpha'$ are integers (hence algebraic), this is an immediate consequence of Lemma \ref{lemmasumexp}.
		%\end{proof}
		
		{ We will prove by strong induction on $\max(m,n)$ that  $\sum_{i=1}^{m} \sigma(\alpha_i) = \sum_{i=1}^{n}\sigma(\alpha'_i) \implies m=n $ and $\alpha = \alpha'$. 
			
			\medskip
			
			\emph{Base case:}
			If $\max(m,n)=1$ and $\sum_{i=1}^{m}\sigma(\alpha_i)=\sum_{i=1}^{n}\sigma(\alpha_1)$, then, either $m=n=0$, or $m=n=1$. In the first case, this is vacuously true. In the second case, we have that $\sigma(\alpha_1)=\sigma(\alpha'_1)$, and then  $\alpha_{1}=\alpha'_{1}$ follows from the injectivity of the sigmoid. 
			%nd $\alpha'$ has size $n > 0$, %since the coordinates of $\alpha$ and $\alpha'$ are ordered, the smallest arguments of the exponentials in Equation \ref{eq:notcompact} are on the left hand side $\gamma_{S,\{\}}\alpha_1$ and on the right hand side 
			%then the equation boils down to $\sigma(\alpha_1) = \sum_{i=1}^{n}\sigma(\alpha'_i)$, which in the sigmoid case is equivalent to:
			%$ \prod_{i=1}^{n} (1+\exp(-\alpha'_i)) = \sum_{i=1}^{n}\prod_{j=1}^{n}(1+\exp(-\alpha'_j))(1+\exp( - \alpha_1))$. By  developing the product on the left hand side, one obtains a sum of  exponentials of algebraic arguments. Due to Lemma \ref{lemmasumexp}, this  happens if and only  if $n=1$ and $\alpha_1=\alpha'_1$.}
			
			\medskip
			
			\emph{Induction step:}
			We suppose that for some given positive integer $p$, and any nonnegative integers $\alpha_1, \cdots, \alpha_m$ and $\alpha'_1, \cdots, \alpha'_n$, such that $\max(m,n) \leq p $,  then  $\sum_{i=1}^{m}\sigma(\alpha_i)=\sum_{i=1}^{n}\sigma(\alpha'_i) \implies m=n $ and $ \alpha=\alpha'$.
			%We suppose the following property true for some nonnegative integer $m$: For any nonnegative integers $\alpha_1, \cdots, \alpha_m$ and $\alpha'_1, \cdots, \alpha'_n$, $\sum_{i=1}^{m}\sigma(\alpha_i)=\sum_{i=1}^{n}\sigma(\alpha'_i) \implies m=n$ and $k=k'$. %{\color{red}We seem to be going back and forth between~\eqref{eq:notcompact} and $\sum_{i=1}^{m}\sigma(\alpha_i)=\sum_{i=1}^{n}\sigma(\alpha'_i)$ as the hypothesis. This is a bit confusing. Perhaps we should not say ``We will prove by induction on $m$ (the size of the vector $\alpha$) that in these conditions,~\eqref{eq:notcompact} implies that $m=n$  and $ \alpha= \alpha'$''. It seems our induction hypothesis is ``$\sum_{i=1}^{m}\sigma(\alpha_i)=\sum_{i=1}^{n}\sigma(\alpha'_i) \implies m=n$ and $k=k'$''.} 

			Let $\alpha$ and $\alpha'$ be vector of integers of size $m$ and $n$ such that $\max(m,n) =p+1$, and $\sum_{i=1}^{m}\sigma(\alpha_i)=\sum_{i=1}^{n}\sigma(\alpha'_i)$. We saw that~\eqref{eq:notcompact} can be derived from this equality, where $\gamma_{S, T}$ are algebraic numbers satisfying $\gamma_{\emptyset,T} = 0$ for all $T\subseteq \{1, \ldots, n\}$. Moreover, the coordinates of $\alpha$ and $\alpha'$ are ordered, hence the smallest term on the left hand side is $\exp(\alpha_1)$ and the smallest term on the right hand side is $\exp(\alpha'_1)$. Using Lemma \ref{lemmasumexp}, this implies that $\alpha_1 = \alpha'_1$. Therefore, $\sigma(\alpha_1) = \sigma(\alpha'_1)$. In turn, this implies $\sum_{i=2}^{m}\sigma(\alpha_i) =\sum_{i=2}^{m}\sigma(\alpha'_i) $. We can apply the induction hypothesis on the vectors $(\alpha_2, \cdots, \alpha_{m})$ and $(\alpha_2, \cdots, \alpha'_{n})$ which both have size $\leq p$. Hence, we obtain that  $m-1=n-1$ and $(\alpha_{2}, \cdots, \alpha_{m})=(\alpha'_{2}, \cdots, \alpha'_{m})$. This in turn proves that $m=n$, and $\alpha=\alpha'$, which ends the induction.
		}
		
		If $\sigma=\mathsf{tanh}=\frac{\exp(2\cdot)+1}{\exp(2\cdot)-1}$. After multiplication by the product of the denominators,~\eqref{eq:it2sigma} yields:
		\begin{equation*}\resizebox{\hsize}{!}{$
			\left(  \sum_{i=1}^{n} (e^{2\alpha_i}-1 )\prod_{j=1, j \neq i}^{n}(e^{2\alpha_j} +1) \right) \prod_{j=1 }^{m} (1+e^{2\alpha'_j}) =  \left( \sum_{i=1}^{m} (e^{2\alpha'_i}-1) \prod_{j=1, j \neq i}^{m}(e^{2\alpha'_j} +1) \right) \prod_{j=1 }^{n}  (1+e^{2\alpha_j})$}
		\end{equation*}
		After developing into a linear combination of exponentials on each side, the arguments containing $\alpha_T$ with $T\neq \emptyset$ on the left hand side and $\alpha'_T$ with $T\neq \emptyset$ on the right hand side have positive algebraic coefficients. There are also arguments of the form $\alpha'_T$ on the left hand side and $\alpha_T$ on the right hand side (in other words, $\gamma_{\emptyset, T}\neq 0$, unlike the sigmoid case). However, note that the coefficients corresponding to these terms are (algebraic and) negative. Hence, as a consequence of Lemma \ref{lemmasumexp}, the arguments with negative coefficients in front of the exponentials must match up on each side, and we are left with an equation similar to~\eqref{eq:notcompact} (the arguments have a factor $2$), where again $\gamma_{\emptyset, T} = 0 $. We can apply the same reasoning by induction as for the sigmoid case, to prove that $\alpha=\alpha'$.

		%(and setting new integer vectors with $k\leftarrow 2k $ and $k' \leftarrow 2k'$), one can also check that the coefficients in front of the exponentials verify the assumptions of Lemma \ref{lemma:arg:equal:exp} {\color{red} I do not see this. I think $\alpha_{\emptyset, T} = -1$ for $T$ that are strict subsets.}. Hence $k=k'$.
		
		- $\sigma \in \{\mathsf{sinh}, \mathsf{cosh} \}$. If $\sigma = \mathsf{cosh}$, then~\eqref{eq:it2sigma} yields:
		\begin{equation*} %\sum_{j=1}^{n}  \exp(\alpha_j) + \exp(-\alpha_j) - \left(\sum_{j=1}^{m}\exp(\alpha'_j) + \exp(-\alpha'_j) \right) = 0 \\
		\left( \sum_{j=1}^{n}  \exp(\alpha_j) - \sum_{j=1}^{m}\exp(\alpha'_j) \right) + \left( \sum_{j=1}^{n}  \exp(-\alpha'_j) - \sum_{j=1}^{m}\exp(-\alpha_j) \right) =0
		\end{equation*}
		%where $C = \lvert \{j : \alpha'_j = 1 \} \rvert -  \lvert \{j : \alpha_j = 1 \} \rvert $. {\color{red}Whya re we singling out $\alpha_i = 1$? Why not just have both sums as over $j=1, \ldots m$ and $j=1, \ldots, n$ and equate the difference to 0?} 
		%First, the $\pm \alpha_j$  are all algebraic. %Hence $C = 0$ otherwise we get a non trivial algebraic linear dependence exponentials of algebraic numbers (all the arguments on the left hand side are not equal to one).  We are left with:
		%\begin{equation*} \sum_{j=1, \alpha_j > 1}^{n}  \exp(\alpha_j) + \exp(-\alpha_j) - (\sum_{j=1, \alpha'_j > 1}^{m}\exp(\alpha'_j) + \exp(-\alpha'_j) ) = 0
		%\end{equation*}
		Due to Lemma \ref{lemmasumexp}, this can only happen if $m=n$ and for all $ j \in \{1, \cdots, n\}$, $\alpha_j = \alpha'_j$, because $\alpha_j$,  $\alpha'_j$ are algebraic for any $j \in \{1, \cdots, n\}$, and the $\alpha_j$'s  and $\alpha'_j$'s are ordered and positive. We conclude that $\alpha=\alpha'$. The case $\sigma \in \{ \mathsf{sinh} \}$ can be treated similarly. 
	\end{proof}
	
	\section{Acknowledgements}
	
	The authors gratefully acknowledge support from Air Force Office of Scientific Research (AFOSR) grant
	FA95502010341 and National Science Foundation (NSF) grant CCF2006587. The authors are also very grateful to insightful comments from reviewers that helped to improve the paper significantly. In particular, we learnt of the reference~\cite{aamand2022exponentially} from one of the reviewers.  We also thank Eran Rosenbluth for pointing out an error in our first proof of Theorem \ref{theorem:uniform}. % {\color{red}We should acknowledge Eran Rosenbluth for pointing out the error in our previous proof.}

	\bibliographystyle{alpha}
	\bibliography{sample}

\end{document}